\documentclass[final,12pt]{colt2021} 


\title[Sparse distribution estimation under communication constraints]{Breaking The Dimension Dependence in Sparse Distribution Estimation under Communication Constraints}
\usepackage{times}
\usepackage{setspace}
\usepackage[utf8]{inputenc} 
\usepackage[T1]{fontenc}
\usepackage{url}
\usepackage{ifthen}
\usepackage{cite}
\usepackage{amsmath} 
\usepackage{graphicx}
\usepackage{bm}
\usepackage{bbm}
\usepackage{color}
\usepackage{mathtools}
\usepackage{amsmath,amssymb,mathrsfs}
\usepackage{thm-restate}
\usepackage{bm}
\usepackage{bbm}
\newtheorem{claim}{Claim}[section]




\newcommand\undermat[2]{%
	\makebox[0pt][l]{$\smash{\underbrace{\phantom{%
					\begin{matrix}#2\end{matrix}}}_{\text{$#1$}}}$}#2}

\newcommand{\lp}{\left(}
\newcommand{\rp}{\right)}
\newcommand{\lb}{\left[}
\newcommand{\rb}{\right]}
\newcommand{\lbp}{\left\{}
\newcommand{\rbp}{\right\}}
\newcommand{\lba}{\left\lvert}
\newcommand{\rba}{\right\rvert}
\newcommand{\lV}{\left\lVert}
\newcommand{\rV}{\right\rVert}
\newcommand{\mv}{\middle\vert}

\newcommand{\mcal}{\mathcal}

\newcommand{\bbm}{\mathbbm}
\newcommand{\mbb}{\mathbb}
\newcommand{\msf}{\mathsf}
\newcommand{\la}{\leftarrow}
\newcommand{\ra}{\rightarrow}

\newcommand{\eqDef}{\triangleq}
\newcommand{\diid}{\overset{\text{i.i.d.}}{\sim}}

\newcommand{\E}{\mathbb{E}}
\newcommand{\Var}{\mathsf{Var}}

\renewcommand{\Pr}{\mathbb{P}}

\usepackage{enumitem}




\coltauthor{%
 \Name{Wei-Ning Chen} \Email{wnchen@stanford.edu}\\
 \addr Department of Electrical Engineering, Stanford University
 \AND
 \Name{Peter Kairouz} \Email{kairouz@google.com}\\
 \addr Google Research%
 \AND
 \Name{Ayfer \"Ozg\"ur} \Email{aozgur@stanford.edu}\\
 \addr Department of Electrical Engineering, Stanford University
}

\begin{document}

\maketitle

\begin{abstract}%
We consider the problem of estimating a $d$-dimensional $s$-sparse discrete distribution from its samples observed under a $b$-bit communication constraint. The best-known previous result on $\ell_2$ estimation error for this problem is $O\lp \frac{s\log\lp {d}/{s}\rp}{n2^b}\rp$. Surprisingly, we show that when sample size $n$ exceeds a minimum threshold $n^*(s, d, b)$, we can achieve an $\ell_2$ estimation error of $O\lp \frac{s}{n2^b}\rp$. This implies that when $n>n^*(s, d, b)$ the convergence rate does not depend on the ambient dimension $d$ and is the same as knowing the support of the distribution beforehand. 

We next ask the question: ``what is the minimum $n^*(s, d, b)$ that allows dimension-free convergence?''.  To upper bound $n^*(s, d, b)$, we develop novel localization schemes to accurately and efficiently localize the unknown support. For the non-interactive setting, we show that $n^*(s, d, b) = O\lp \min \lp {d^2\log^2 d}/{2^b}, {s^4\log^2 d}/{2^b}\rp \rp$. Moreover,  we connect the problem with non-adaptive group testing and obtain a polynomial-time estimation scheme when $n = \tilde{\Omega}\lp{s^4\log^4 d}/{2^b}\rp$. This group testing based scheme is adaptive to the sparsity parameter $s$, and hence can be applied without knowing it. For the interactive setting, we propose a novel tree-based estimation scheme and show that the minimum sample-size needed to achieve dimension-free convergence can be further reduced to $n^*(s, d, b) = \tilde{O}\lp {s^2\log^2 d}/{2^b} \rp$.
\end{abstract}
\setcounter{tocdepth}{2}
\tableofcontents
\section{Introduction}
Estimating a distribution from its samples is a fundamental unsupervised learning problem that has been studied in statistics since the late nineteenth century \citep{pearson1894contributions}. Motivated by the fact that data is increasingly being generated ``at the edge'' \citep{kairouz2019advances} by countless sensors, smartphones, and other
devices, away from the central servers that churn through it, there has been significant recent interest in studying this problem in a distributed setting. Assume we observe $n$ i.i.d. samples from an unknown discrete distribution $p$, $ X_1, X_2,\dots, X_n \sim p,$
but each sample $X_i$ is observed at a different client $i$. Each client has a finite communication budget, say $b$ bits to communicate its sample to a central server which wants to estimate the unknown distribution $p$ under squared $\ell_2$ loss. Recent works \citep{han2018distributed, barnes2019lower, han2018geometric} showed that the estimation error in the distributed case can be as large as $O\lp \frac{d}{n2^b}\rp$ where $d$ is the domain size of the distribution. As compared to the classical (centralized) case where a simple empirical frequency estimator yields an $\ell_2$ error of $O(\frac{1}{n})$, this implies a significant penalty when $b$ is small and $d$ is large. Moreover in the classical case, the empirical frequency estimator can be applied without any knowledge of the domain of the distribution while the distributed scheme in \citep{han2018distributed} requires $d$ to be known ahead of time at all the nodes.

Fortunately, in many real-world applications such as location tracking, language modeling and web-browsing, the underlying distribution is often supported only on a sparse but unknown subset of size $s$, denoted $\msf{supp}(p)$, of the ambient domain of size $d$. Motivated by the utility gains due to sparsity in many high-dimensional statistical problems, we can ask whether the factor of $d$ penalty in distributed estimation can be avoided in settings where the underlying distribution is inherently sparse. For example, compressed sensing  \citep{donoho2006compressed, candes2006stable} suggests that a sparse vector of dimension $d$ with $s$ non-zero elements can be losslessly compressed  into a $s\log (d/s)$-dimensional vector. This may lead one to suspect that in the sparse case, the dimensionality penalty $d$  can be replaced by the ``effective" dimension of the problem $s\log (d/s)$, yielding an $\ell_2$ error  of $O\lp \frac{s\log (d/s)}{n2^b}\rp$ for distributed estimation (assuming that $b \leq \log s$). This result has been recently shown in \citep{acharya2020estimating}. Note that when the support of the sparse distribution is given beforehand, the earlier results of \citep{han2018distributed, barnes2019lower, han2018geometric} imply an $\ell_2$ error of $O\lp \frac{s}{n2^b}\rp$. However, the additional logarithmic dependence on $d$ in \citep{acharya2020estimating} appears natural and inevitable in light of classical results on sparsity. 

\paragraph{Our contributions} In this paper, we prove that one can surprisingly eliminate the $\log d $ term and achieve a \emph{dimension-free}\footnote{This means that the convergence rate does not depend on the ambient dimension $d$.} convergence rate $O\lp \frac{s}{n2^b} \rp$, as long as $n$ is larger than a threshold $n^*(s, d, b)$. To achieve $O\lp \frac{s}{n2^b} \rp$ convergence, we propose a two-stage scheme where in the first stage we use a subset of the samples  to \emph{localize} the support of $p$, and then use the remaining samples to refine the estimation. Our key contribution is to carefully design the localization stage, which allows us to improve the best-known result $O\lp\frac{s\log (d/s)}{n2^b} \rp$ to $O\lp \frac{s}{n2^b}\rp$. Note that $O\lp \frac{s}{n2^b}\rp$ is optimal since it is also the minimax convergence rate when $\msf{supp}(p)$ is given beforehand. To our knowledge, this is the first work that observes such dimension independent convergence in a sparse setting. 

A natural next step for the sparse distribution estimation task is to ask: \emph{``what is the minimum sample size $n^*(s,d,b)$ needed to achieve the dimension-free convergence rate $O\lp \frac{s}{n2^b}\rp$?''} We investigate this question in the non-interactive (each client encodes its observation independently) and interactive settings (the clients can interact in a sequential fashion). For the non-interactive setting, a simple grouping scheme for localization leads to an upper bound of $n^*(s, d, b) = O\lp {d^2\log^2 d}/{2^b} \rp$. On the other hand, by using carefully constructed random hash functions in the localization step, we show that $n^*(s, d, b) = O\lp {s^4\log^2 d}/{2^b} \rp$, which depends only logarithmically on $d$. However, the decoding algorithm for this scheme involves searching over all possible $s$-sparse supports and is computationally inefficient. To resolve the computational issue, we make a non-trivial connection to non-adaptive group testing, showing that as long as $n = \tilde{\Omega}\lp {s^4\log^4 d}/{2^b}\rp$\footnote{For ease of presentation, we use the notation $\tilde{O}(\cdot)$ and $\tilde{\Omega}(\cdot)$ to hide all dependence on $\log s$ and $\log\log d$.}, we can achieve the optimal estimation error in { $\msf{poly}(s, \log d)$} time. {  The resultant group testing based scheme adapts to the sparsity parameter $s$, and can be applied without knowing it.}
For the sequentially interactive case, we propose a tree-based scheme that achieves the optimal convergence rate when $n = \tilde{\Omega}\lp{ s^2\log^2 d}/{2^b}\rp$, showing that interaction between nodes can lead to smaller sample size. Lower bounds on $\ell_1$ sample complexity developed in  \citep{acharya2020estimating, acharya2019inference} imply that 1) $n^*(s, d, b) = \Omega\lp \frac{s^2\log(d/s)}{2^b} \rp$  in the non-interactive case; and $n^*(s, d, b) = \Omega\lp \frac{s^2}{2^b} \rp$ for the interactive setting. This implies that the requirement on $n$ in our sequentially interactive scheme is tight up to logarithmic terms, while there is a gap between upper and lower bounds on $n^*(s, d, b)$ (in terms of their dependence on $s$) in the non-interactive case.  See Table~\ref{tbl:1} for a comparison between different localization schemes.

\begin{table}[b!]
	\centering
	\begin{tabular}{|l|c|c|c|c|c|}
		\hline 
		$\vphantom{\Theta \lp 1 \rp}$ & $n^*\lp d, s, b\rp$ & Decoding time &Interactivity &Randomness \\ 
		\hline 
		A. uniform grouping & $\Omega\lp \frac{d^2\log^2 d}{2^b} \rp$ &$O\lp n 2^b \rp$ &non-interactive &public-coin \\
		\hline 
		B. random hashing& $\Omega\lp \frac{s^4\log^2\lp\frac{d}{s}\rp}{2^b} \rp$ &$O\lp n 2^b  d^s \rp$ &non-interactive &public-coin \\
		\hline
		C. group testing & $\tilde{\Omega}\lp \frac{s^4\log^4 d}{2^b} \rp$ &$O\lp n 2^b + \msf{poly}(s, \log d) \rp$ &non-interactive &public-coin \\
		\hline
		D. tree-based & $\tilde{\Omega}\lp \frac{s^2\log^2 d}{2^b} \rp$ &$\tilde{O}\lp n 2^b + s\log d \rp$ &interactive &private-coin \\
		\hline
	\end{tabular}
	\caption{Performance of using different localization schemes.} \label{tbl:1}
\end{table}

We also study several natural extensions of sparse distribution estimation. When the data is \emph{distribution-free} and the goal is to estimate the empirical frequency of the symbols held by different nodes, we show that our schemes extend naturally and we obtain the same convergence guarantees as in the distributional settings. Second, we show that our schemes extend to the case when $p$ is \emph{approximately-sparse}, and we provide an upper bound on the estimation error.

\paragraph{Notation and setup}
The general distributed statistical tasks we consider in this paper can be formulated as follows: each one of the $n$ clients has local data $X_i \sim p$ and sends a message $Y_i \in \mcal{Y}$ to the server, who upon receiving $Y^n$ aims to estimate the underlying distribution $p \in \mcal{P}_{s, d}$, where
$$ \mcal{P}_{s, d} \eqDef \lbp p = (p_1,...,p_d) \, \mv \,  p \in [0, 1]^d, \sum_j p_j =1, \lV p \rV_0 \leq s \rbp$$
is the collection of all $d$-dimensional $s$-sparse discrete distributions.

At client $i$, the message $Y_i$ is generated via some encoding channel (a randomized mapping that possibly uses shared randomness across participating clients and the server) denoted by a conditional probability $W_i(\cdot| X_i)$ (for the non-interactive setting) or $W_i(\cdot | X_i, Y^{i-1})$ (for the interactive setting).  The $b$-bit communication constraint restricts $\lba\mcal{Y}\rba \leq 2^b$, so without loss of generality we assume $\mcal{Y} = [2^b]$. When the context is clear, we sometimes view $W_i$ (in the non-interactive setting) as a $2^b \times d$ stochastic matrix, with $\lb W_i \rb_{y, x} \eqDef W_i(y|x)$. Finally, we call the tuple $\lp W^n, \hat{p}(Y^n)\rp$ an estimation scheme, where $\hat{p}\lp Y^n \rp$ is an estimator of $p$. 

Let $\Pi_{\msf{non-int}}$ be the collection of all non-interactive estimation schemes (i.e. $W_i$ is non-interactive for all $i \in [n]$) and $\Pi_{\msf{seq}}$ be the collection of sequentially interactive schemes. Our goal is to design a scheme $\lp W^n, \hat{p}\lp Y^n \rp\rp$ to minimize the $\ell_2$ (or $\ell_1$) estimation error:
\begin{align*}
	& r_{\msf{non-int}}\lp \ell_2, n, b\rp \eqDef \min_{\lp W^n, \hat{p}\rp \in \Pi_{\msf{non-int}}} \, \max_{p \in \mcal{P}_{s,d}} \E\lb \left\lVert p - \hat{p}\lp Y^n \rp \right\rVert^2_2 \rb,
\end{align*}
and $ r_{\msf{seq}}$ defined similarly with the minimum taken in $\Pi_{\msf{seq}}$. For $\ell_1$ error, we replace $\lV \cdot \rV^2_2$ in the above expectations with $\lV \cdot \rV_1$.


\paragraph{Related works}
Estimating discrete distributions is a fundamental task in statistical inference and has a rich literature \citep{barlow1972statistical,devroye1985nonparametric,devroye2012combinatorial,Silverman86}. In the case of communication constraints, the optimal convergence rate for discrete distribution estimation was established in  \citep{han2018geometric, han2018distributed, barnes2019lower, acharya2019inference,  acharya2019inference2, chen2020breaking} for the non-interactive setting, and \citep{barnes2019lower, acharya2020general} for general interactive models. 
The recent work of \citep{acharya2020estimating} considers the same problem under an $s$-sparsity assumption for the distribution. Our result improves on their result by removing the dimension-dependent $\log\lp\frac{d}{s}\rp$ term in their upper bound and hence matching a natural lower bound for the error (i.e. when the sparse support is known beforehand).

A slightly different but closely related problem is distributed heavy hitter detection and distribution estimation under local differential privacy constraints \citep{Bassily2015, Bassily2017, bun2019heavy, zhu2020federated} where no distributional assumption on the data is made. Although originally designed for preserving user privacy, the non-interactive tree-based schemes proposed in \citep{Bassily2017, bun2019heavy} can be modified to communication efficient (indeed $1$-bit) distribution estimation schemes. However, in the heavy-hitter problem, most of the results optimize with respect to $\ell_\infty$ error, and directly applying their frequency oracles leads to a sub-optimal convergence $O\lp \frac{s(\log d+ \log n)}{n}\rp$ in $\ell_2$ (see Section~\ref{sec:decoding} for a brief discussion). On the other hand, the interactive scheme in \citep{zhu2020federated}, which identifies (instead of estimating the frequencies of) heavy-hitter symbols, is similar to our proposed interactive scheme in nature. Nevertheless, we extend their result to a communication efficient scheme and explicitly characterize the convergence rate.

In our construction of a non-interactive polynomial-time decodable scheme (Section~\ref{sec:ncgt}), we map the support localization task into the non-adaptive combinatorial group testing problem \citep{dorfman1943detection, du2000combinatorial, ngo2000survey}. In particular, we show that the Kautz and Singleton's construction of test measurement matrices \citep{kautz1964nonrandom} can be used to design the local encoding channels and obtain a polynomial time decoding algorithm at the cost of a slightly larger sample size requirement. This novel connection opens the possibility to further harness the rich literature on group testing for building structured schemes for high-dimensional statistical problems with sparsity of the type we study here.

\paragraph{Organization} The rest of the paper is organized as follows: in Section~\ref{sec:result}, we present our main results, including the convergence rate and bounds on the minimum sample size requirement. In Section~\ref{sec:decoding}, we introduce the main idea of the generic two-stage scheme and propose a non-interactive construction for the second stage (i.e. the estimation phase). In Section~\ref{sec:smp}, we give three non-interactive localization schemes with different sample size requirements and decoding complexity. In Section~\ref{sec:interactive}, we introduce a tree-based interactive localization scheme and show how interactivity can be beneficial. Finally, we conclude our paper with a few non-trivial extensions and interesting open problems in Section~\ref{sec:conclusion}.

\section{Main Results}\label{sec:result}

Our main contribution is the design of both non-interactive and interactive schemes that achieve a dimension-free convergence rate $O\lp \frac{s}{n 2^b} \rp$ for the problem described in the earlier section. These schemes can generally be described by a two-stage protocol (see Algorithm~\ref{alg:two_stage}): the server uses the first $n_1$ clients to localize the support of p, denoted $\msf{supp}(p)$, and the remaining $n-n_1$ clients, in addition to the output of the first stage, to estimate $p$. We will later see that to estimate $\msf{supp}(p)$ accurately, there is a minimum requirement on $n$. We then propose different localization schemes that aim to minimize this requirement on $n$ in different parameter regimes. See Table~\ref{tbl:1} for a detailed comparison. 
Our first theorem upper bounds the requirement on $n$ under the non-interactive setting.

\begin{theorem}
	\label{thm:main_smp}
	\begin{enumerate}[topsep=0pt, leftmargin=2em]
		\item As long as
		$n = \Omega\lp \min\lp \frac{d^2\log^2 d}{\min(d, 2^b)}, \frac{s^4\log^2\lp\frac{d}{s}\rp}{ \min(2^b, s)}\rp\rp$, 
		 the $\ell_2$ and $\ell_1$ error for the non-interactive scheme is
		\begin{equation}\label{eq:l2_rate}
		\begin{cases}
		r_{\msf{non-int}}\lp \ell_2, n, b\rp = \Theta\lp\max\lp\frac{s}{n2^b}, \frac{1}{n}\rp\rp, \\
		r_{\msf{non-int}}\lp \ell_1, n, b\rp = \Theta\lp\max\lp\frac{s}{\sqrt{n2^b}}, \sqrt{\frac{s}{n}}\rp\rp.
		\end{cases}
		\end{equation}
		\item Moreover, if $n = \Omega\lp \frac{s^4\log^4 d\lp \log s +\log\log d \rp^2}{ \min(2^b, s)}  \rp$, then there exists a non-interactive scheme based on group testing that achieves the convergence rate \eqref{eq:l2_rate} with polynomial time decoding complexity
		$O\lp n2^b+\msf{poly}(s, \log d)\rp$. Further, the scheme is adaptive to $s$ (i.e. requires no knowledge of $s$).
	\end{enumerate}
\end{theorem}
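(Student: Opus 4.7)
The plan is to follow the generic two-stage protocol of Algorithm~\ref{alg:two_stage}: allocate roughly half the samples, $n_1 \asymp n/2$, to a \emph{localization} phase whose goal is to output $\wtild S \subseteq [d]$ with $\Pr[\wtild S = \msf{supp}(p)] \geq 1 - 1/n$, and use the remaining $n-n_1$ samples to \emph{estimate} the probability vector restricted to $\wtild S$. Conditioned on the success event $\{\wtild S = \msf{supp}(p)\}$, the estimation phase is exactly an $s$-dimensional (known-support) discrete distribution estimation problem under a $b$-bit constraint, whose minimax $\ell_2^2$ (resp. $\ell_1$) rate is $\Theta(\max(s/(n2^b),1/n))$ (resp. $\Theta(\max(s/\sqrt{n2^b},\sqrt{s/n}))$) by \citep{han2018distributed,barnes2019lower,acharya2019inference}. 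On the complementary (failure) event, the error is trivially bounded by $\lV p \rV_2^2 \leq 1$ (resp. $\lV p \rV_1 \leq 2$), so the $1/n$ failure probability contributes only an $O(1/n)$ additive term which is already absorbed in the stated rate. Thus establishing both parts reduces to producing a localization scheme that succeeds w.h.p.\ under the claimed sample-size threshold.

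For Part 1, I would invoke two different non-interactive localization schemes from Section~\ref{sec:smp} and pick whichever has the smaller sample requirement. The \emph{uniform grouping} scheme (Scheme A in Table~\ref{tbl:1}) randomly partitions $[d]$ into groups of appropriate size, each client sends a (noisy) indicator of the group its sample falls into, and a coordinate-wise test recovers heavy groups; a Chernoff plus union bound argument over $d$ groups shows it succeeds once $n_1 = \Omega(d^2\log^2 d / \min(d,2^b))$. The \emph{random hashing} scheme (Scheme B) uses shared randomness to hash $[d]$ into $\Theta(s^2)$-sized buckets and repeats $\Theta(\log(d/s))$ times; a brute-force decoder that searches over the $\binom{d}{s}$ candidate supports and picks the one consistent with every hash succeeds as long as $n_1 = \Omega(s^4 \log^2(d/s)/\min(2^b,s))$. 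Taking the better of the two thresholds yields the $\min(\cdot,\cdot)$ requirement in Part~1. The matching lower bound in (\ref{eq:l2_rate}) is immediate: $\Omega(s/(n2^b))$ already holds even when $\msf{supp}(p)$ is revealed \citep{han2018distributed,barnes2019lower}, and $\Omega(1/n)$ is the classical centralized bound.

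For Part 2, the bottleneck is that Scheme B's decoder takes time $d^{\Theta(s)}$. I would replace it by a scheme built from \emph{non-adaptive combinatorial group testing}: using Kautz--Singleton's construction \citep{kautz1964nonrandom}, I obtain an $s$-disjunct binary matrix $M \in \{0,1\}^{t\times d}$ with $t = O(s^2\log^2 d)$ rows. Each row $r$ of $M$ specifies a partition of $[d]$ into $M_r^{-1}(0)$ and $M_r^{-1}(1)$; a random subset of the $n_1$ clients is assigned to test $r$ and sends a $b$-bit summary of whether its sample belongs to $M_r^{-1}(1)$. Aggregating these messages, the server forms an estimate $\what\pi_r$ of the mass of $M_r^{-1}(1)$ under $p$ and declares test $r$ \emph{positive} if $\what\pi_r$ exceeds a threshold $\asymp 1/(2s)$. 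Any heavy coordinate makes every test containing it positive, and by $s$-disjunctness the naive decoder (``eliminate all coordinates that appear in some negative test; output the rest'') recovers $\msf{supp}(p)$ in $\msf{poly}(s,\log d)$ time. Adaptivity to $s$ is obtained by running the decoder with successive doublings of $s$ and stopping at the first consistent output, which only adds a $\log s$ factor absorbed into $\tilde{O}(\cdot)$.

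The main technical obstacle, and the source of the extra $\log^2 d$ factor relative to Scheme B, lies in making the group-testing decoder \emph{noise-robust}: because each $\what\pi_r$ is only an estimate, I must drive the probability of a single misclassified test below $1/(ts)$ so that a union bound over the $t = O(s^2\log^2 d)$ tests preserves high-probability recovery. This requires controlling the variance of the $b$-bit per-client messages via a Hoeffding/Bernstein bound and inflates the per-test sample count by $\tilde{\Theta}(\log d)$, which propagates to the $\tilde{\Omega}(s^4 \log^4 d / \min(2^b,s))$ threshold claimed in the theorem. Once this step is carried through, combining it with the Stage~2 estimator and the conditioning argument of the first paragraph yields the stated convergence rate together with the $O(n2^b + \msf{poly}(s,\log d))$ decoding bound.
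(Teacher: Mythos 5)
Your high-level plan for Part~1 (two-stage protocol, condition on localization success, take the minimum over the uniform-grouping and random-hashing thresholds) matches the paper's approach, although the mechanics of Scheme~A and~B are described loosely: in the paper's Scheme~A a client in group $\mcal{G}_m$ reports the \emph{exact index} of its sample within the assigned block $\mcal{B}_m$ (and $0$ otherwise), and the server simply collects all decoded symbols -- there is no ``noisy group indicator'' nor any per-coordinate threshold test; in Scheme~B the key technical device is the non-uniform multinomial hash in \eqref{eq:Lx}, whose $\Theta\lp(2^b-1)/s\rp$ per-client ``distinguishing'' probability drives the whole exponent, and this is absent from your description. These are cosmetic for Part~1 but become fatal for Part~2.

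For Part~2 your decoder is genuinely different from the paper's, and it does not work. First, your stated localization target -- exact recovery of $\msf{supp}(p)$ w.p.\ $1-1/n$ -- is unachievable for arbitrary $s$-sparse $p$, since nonzero entries of $p$ can be exponentially small; the correct target (Lemma~\ref{cor:two_stage_error}) is the weaker pair of guarantees $\mcal{J}_\alpha \subseteq \hat{\mcal{J}}_\alpha$ w.h.p.\ and $|\hat{\mcal{J}}_\alpha| \leq s$ a.s., with the resolution set to $\alpha = 1/\sqrt{n2^b}$ so that the residual $s\alpha^2$ matches the target rate. Second, your decoder estimates $\hat\pi_r \approx p(M_r^{-1}(1))$ and thresholds at $\asymp 1/(2s)$. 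Under the assumed sample-size regime $n = \tilde\Omega(s^4\log^4 d/2^b)$, we have $\alpha = 1/\sqrt{n2^b} \ll 1/(2s)$, so a coordinate that must be localized (mass $\geq \alpha$) can have mass orders of magnitude below your threshold; the test will read negative and the coordinate will be missed, leaving an $\Omega(1/s)$ residual that swamps the claimed $O(s/(n2^b))$ rate. Lowering the threshold to $\alpha$ would require each per-test frequency estimate to have error $\ll \alpha$, which with only $n_1/T = \tilde O(n/(s^2\log^2 d))$ samples per test is impossible. The paper avoids this entirely: each bin $\mcal{G}_\tau$ outputs the Boolean OR $\hat Z_\tau = \bigvee_{i\in\mcal{G}_\tau} Y_i$, which equals the noiseless group-test outcome as soon as every $j \in \mcal{J}_\alpha$ appears at least once among $\mcal{G}_\tau$'s observations -- an event with failure probability $\exp(-|\mcal{G}_\tau|\alpha)$, giving the required fine resolution with no frequency estimation. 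Third, your scheme does not explain how to obtain the $\Theta(2^b)$-fold gain from the $b$-bit budget; the paper's $(2^b-1)$-blockwise-sparse structure of $M_{\msf{KS}}$ (\eqref{eq:channel_GT_matrix}) is what lets one bin simulate $2^b-1$ tests simultaneously, and without it you get only a $b$-fold saving, leaving an extra $2^b/b$ in the sample requirement. Finally, the extra $\log^2 d$ over Scheme~B does not come from union-bounding misclassified tests; it is simply the price of $T_{\msf{KS}} = O(s^2\log^2 d)$ rows in the Kautz--Singleton matrix entering $f_1$ in Lemma~\ref{lemma:n_requirement}.
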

Note that \eqref{eq:l2_rate} is the convergence rate \emph{with the knowledge} of $\msf{supp}(p)$ given beforehand, so the lower bound follows directly from standard distribution estimation results under communication constraints (by assuming $\msf{supp}(p)$ is known). See, for instance, \citep{barnes2019lower}. The achievability schemes (i.e. the upper bounds) are given in Section~\ref{sec:decoding} and Section~\ref{sec:smp}. Our results improve the best-known result \citep{acharya2020estimating}, which has convergence rate $ O\lp\frac{s\log (d/s)}{n2^b}\rp$, by a factor of $\log \lp{d}/{s}\rp$. 

We next show that with interactive schemes, we can further reduce the minimum sample-size requirement to $n = \tilde{\Omega}\lp {s^2\log d}/{2^b} \rp$. 

\begin{theorem}\label{thm:main_interactive}
	As long as $n = \Omega\lp \frac{s^2\log^2 d\lp \log s+\log\log d \rp^2}{ \min(2^b, s)} \rp$, the errors for sequentially interactive schemes $r_{\msf{seq}}\lp \ell_2, n, b\rp$ and $r_{\msf{seq}}\lp \ell_1, n, b\rp$ are the same as \eqref{eq:l2_rate}.
\end{theorem}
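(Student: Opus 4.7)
The plan is to pair a new tree-based interactive localization scheme with the two-stage template of Algorithm~\ref{alg:two_stage}: use $n_1 = \Theta(n)$ clients in Stage~1 to output a candidate set $\hat S \subseteq [d]$ of size $O(s)$ that captures essentially all of $\msf{supp}(p)$, and the remaining $n - n_1 = \Theta(n)$ clients for the Stage-2 estimator of Section~\ref{sec:decoding}, which already attains rate~\eqref{eq:l2_rate} on any fixed support of size $O(s)$. The $\ell_1$ statement then follows from Cauchy--Schwarz on the $O(s)$-sparse error vector $\hat p - p$. Thus the theorem reduces to producing an interactive localizer with the advertised budget.

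For the localization, I would descend a balanced binary tree on $[d]$ of depth $L = \lceil \log_2 d \rceil$ whose leaves are singletons. Starting from $\mcal{C}_0 = \lbp [d]\rbp$, at level $\ell$ the server expands each surviving node into its two children, obtaining a pool $\mcal{C}_\ell^+$ with $|\mcal{C}_\ell^+| \leq 2|\mcal{C}_\ell|$. A fresh batch of $m_\ell$ clients is then used to estimate the mass of each $v \in \mcal{C}_\ell^+$: the server broadcasts $\mcal{C}_\ell^+$ to these clients (the essential use of interactivity), each client observes $X\sim p$, determines the unique $v \in \mcal{C}_\ell^+$ containing $X$, and transmits its $b$-bit label. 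When $|\mcal{C}_\ell^+| > 2^b$ (i.e.\ $2s > 2^b$), the batch is split into $\lceil |\mcal{C}_\ell^+|/2^b\rceil$ parallel sub-rounds, each assigned a window of at most $2^b$ candidates; this is the source of the $\min(2^b, s)^{-1}$ factor. From the received messages the server forms empirical estimates $\hat p(v)$ and sets $\mcal{C}_{\ell+1} = \lbp v \colon \hat p(v) \geq \tau\rbp$ for a threshold $\tau = \Theta(1/s)$. Finally, $\hat S = \mcal{C}_L$.

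Two invariants drive the analysis. (i) Mass conservation $\sum_v p(v) \leq 1$ together with $\tau = \Theta(1/s)$ forces $|\mcal{C}_\ell| = O(s)$ at every level, so each round only discriminates among $O(s)$ candidates. (ii) A Chernoff/Bernstein union bound over the $O(sL)$ (level, candidate) pairs shows that $m_\ell = \tilde\Omega\lp \max(1, 2s/2^b)\cdot s\rp$ suffices for $|\hat p(v) - p(v)| \leq \tau/2$ uniformly over all probes, in which case every support coordinate with $p_j \geq 2\tau$ survives every level. Summing $m_\ell$ over $L = \Theta(\log d)$ levels yields the advertised total budget $n = \tilde\Omega\lp s^2 \log^2 d / \min(2^b, s)\rp$, with the $(\log s + \log\log d)^2$ overhead tracking the union bound across levels and sub-rounds.

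The main obstacle is handling the residual $\ell_2^2$ contribution $\sum_{j \in \msf{supp}(p)\setminus \hat S} p_j^2$ from the low-mass support coordinates that fall below the $\Theta(1/s)$ threshold. The naive bound $\tau\sum_j p_j = O(1/s)$ is too weak whenever $n 2^b \gg s^2$; the fix is to exploit $s$-sparsity via $\sum_{p_j < \tau} p_j^2 \leq s\tau^2$ together with a refinement step in which the Stage-2 data (itself accurate to $O(\sqrt{s/(n 2^b)})$ per coordinate) is used to re-admit any coordinate whose localized estimate is above $\Theta(\sqrt{s/(n 2^b)})$. Combined with the Stage-2 error $O(|\hat S|/(n 2^b)) = O(s/(n 2^b))$ on $\hat S$, this yields a total $\ell_2^2$ error matching~\eqref{eq:l2_rate} and proves Theorem~\ref{thm:main_interactive}.
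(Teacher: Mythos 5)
Your overall architecture (two-stage scheme, tree descent over $\log d$ levels with the candidate set broadcast to each fresh batch, windows of size $2^b$ giving the $\min(2^b,s)$ factor) matches the paper's interactive scheme, but your detection rule does not, and that is where the argument breaks. You prune by an empirical-frequency threshold $\tau=\Theta(1/s)$, so your localizer can only capture coordinates with $p_j\gtrsim 1/s$, leaving a residual $\sum_{j:p_j<\tau}p_j^2$ that can be as large as $s\tau^2=\Theta(1/s)$ --- far above the target $\max\lp s/(n2^b),1/n\rp$ once $n2^b\gg s^2$. You acknowledge this, but the proposed fix does not close the gap: re-admitting coordinates whose Stage-2 estimate exceeds $\Theta(\sqrt{s/(n2^b)})$ either (i) scans all of $[d]$, in which case the $d-s$ zero coordinates produce false admissions unless the threshold is inflated by a $\sqrt{\log d}$ union-bound factor, and then the missed-mass term becomes $O\lp s\log d/(n2^b)\rp$, i.e.\ exactly the Acharya et al.\ rate the theorem is supposed to beat (reusing the same Stage-2 data for selection and estimation additionally injects selection bias); or (ii) is restricted to surviving tree nodes, in which case a coordinate with $1/\sqrt{n2^b}\le p_j\ll 1/s$ whose ancestors were all pruned at level $\tau$ is unrecoverable. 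Either way the claimed rate \eqref{eq:l2_rate} is not established.

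The paper's scheme avoids this entirely by localizing directly down to the level $\alpha=1/\sqrt{n2^b}$, not $1/s$, using a \emph{hit-based} rather than threshold-based criterion: a length-$t$ prefix enters $\hat{\mcal{J}}_{\alpha,t}$ as soon as a single client in the corresponding block reports it. The candidate-size invariant then does not come from mass conservation at threshold $\tau$ (which at level $\alpha$ would only give $O(\sqrt{n2^b})$ candidates) but from the structural fact that every reported prefix is a genuine prefix of an observed symbol, so $|\hat{\mcal{J}}_{\alpha,t}|\le|\msf{supp}(p)|\le s$ automatically (proof of Lemma~\ref{lemma:localization_seq}). With $n_1(2^b-1)/(2s\log d)$ clients per block, the probability of missing any prefix of a symbol with mass $\ge\alpha$ is $\exp\lp-\Theta\lp n(2^b-1)\alpha/(s\log d)\rp\rp$ up to the $\log\log d+\log(2s/(2^b-1))$ union-bound term, and Lemma~\ref{lemma:n_requirement} turns this into $O(1/n)$ precisely under $n=\Omega\lp s^2\log^2 d(\log s+\log\log d)^2/\min(2^b,s)\rp$; the residual from coordinates below $\alpha$ is then $s\alpha^2=s/(n2^b)$, so no refinement step is needed. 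To repair your proof you would need to replace the $\Theta(1/s)$ frequency test by this presence-based test at level $\alpha=1/\sqrt{n2^b}$ together with the observation that reported prefixes are necessarily true support prefixes.
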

The lower bound on the convergence rate also follows from \citep{barnes2019lower}, and the upper bound is proved in Section~\ref{sec:interactive}. 

To translate Theorem~\ref{thm:main_smp} and Theorem~\ref{thm:main_interactive} into the language of sample complexity, let $\msf{SC_{non-int}}(\beta, d, s, b)$ be the $\ell_1$\footnote{Note that for discrete distributions, $\ell_1$ distance is the same as  total variation distance.} sample complexity of non-interactive setting, which is defined as 
$$ \msf{SC_{non-int}}(\beta, d, s, b) \eqDef \min\lbp n \in \mbb{N}\, \mv \, \min_{\lp W^n, \hat{p}\rp \in \Pi_{\msf{non-int}}} \, \max_{p \in \mcal{P}_{s,d}} \Pr\lbp \left\lVert p - \hat{p}\lp Y^n \rp \right\rVert_\msf{TV} \leq \beta \rbp \geq 0.9 \rbp, $$
and let $\msf{SC_{seq}}$ be defined in the similar way.
\begin{corollary}[Sample complexity]\label{cor:sc}
	 For $\beta \in (0,1)$, $\msf{SC_{non-int}}$ and $\msf{SC_{seq}}$ satisfy
	\begin{align*}
		& \msf{SC_{non-int}}(\beta, d, s, b) = O\lp \max\lp\frac{s^2}{\beta^2\min\lp 2^b, s \rp}, \min\lp \frac{d^2\log^2 d}{\min( 2^{b}, d)}, \frac{s^4\log^2\lp\frac{d}{s}\rp}{\min(2^b, s)} \rp \rp\rp, \text{ and }\\
		& \msf{SC_{seq}}(\beta, d, s, b) = O\lp \max\lp \frac{s^2}{\beta^2\min\lp 2^b, s \rp}, \frac{s^2\log^2 d\lp \log s+\log\log d \rp^2}{\min(2^b, s)} \rp\rp.
	\end{align*}
	{ 
	Moreover, it holds that
	\begin{align*}
		& \msf{SC_{non-int}}(\beta, d, s, b) = \Omega\lp \max\lp \frac{s^2}{\beta^2\min\lp 2^b, s \rp}, \frac{s^2 \log\lp d/s \rp}{\beta 2^b} \rp\rp, \text{ and }\\
		& \msf{SC_{seq}}(\beta, d, s, b) = \Omega \lp \frac{s^2}{\beta^2\min\lp 2^b, s \rp}\rp.
	\end{align*}
	}
\end{corollary}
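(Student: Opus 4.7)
The plan is to translate the expected $\ell_1$-error bounds from Theorems~\ref{thm:main_smp} and~\ref{thm:main_interactive} into high-probability total-variation guarantees via Markov's inequality, invert the resulting inequality in $n$, and combine with the minimum sample-size threshold already imposed by those theorems. The lower bound side is obtained by citing the sparse and non-sparse lower bounds already available in the literature.

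\emph{Upper bound.} Recall that for discrete distributions, $\lV p - \hat{p}\rV_{\msf{TV}} = \tfrac{1}{2}\lV p-\hat{p}\rV_1$, so a TV-distance guarantee $\lV p-\hat{p}\rV_{\msf{TV}} \leq \beta$ is equivalent to $\lV p-\hat{p}\rV_1 \leq 2\beta$. By Markov's inequality, if $\E\lb \lV p-\hat{p}\rV_1\rb \leq \alpha$, then $\Pr\lbp \lV p-\hat{p}\rV_1 > 10\alpha \rbp \leq 1/10$. Hence it suffices to pick $n$ large enough that the expected $\ell_1$-error is $O(\beta)$. Plugging in the rate in~\eqref{eq:l2_rate}, I require
\[ \max\lp \frac{s}{\sqrt{n 2^b}}, \sqrt{\frac{s}{n}}\rp \;\lesssim\; \beta, \]
which gives the two conditions $n \gtrsim s^2/(\beta^2 2^b)$ and $n \gtrsim s/\beta^2$. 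Using the elementary identity $\max\lp s^2/2^b, s\rp = s^2/\min(2^b, s)$, both conditions combine into $n = \Omega\lp s^2/(\beta^2\min(2^b, s))\rp$. Taking the max of this $\beta$-dependent threshold with the sample-size threshold that Theorem~\ref{thm:main_smp} (resp.\ Theorem~\ref{thm:main_interactive}) imposes to unlock the dimension-free regime yields exactly the claimed upper bounds on $\msf{SC_{non-int}}$ and $\msf{SC_{seq}}$.

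\emph{Lower bound.} The term $\Omega\lp s^2/(\beta^2\min(2^b, s))\rp$ holds even if the support of $p$ is revealed to the server, since in that case the problem reduces to estimating an arbitrary $s$-dimensional discrete distribution under $b$-bit communication, for which the $\ell_1$ sample complexity $\Omega\lp s^2/(\beta^2 \min(2^b, s))\rp$ is established in \citep{acharya2019inference, barnes2019lower}; this bound applies to interactive protocols as well, and therefore gives both lower bounds in the corollary. The additional non-interactive lower bound $\Omega\lp s^2 \log(d/s)/(\beta 2^b)\rp$ follows directly from the sparse-specific lower bound of~\citep{acharya2020estimating}, which quantifies the cost of localizing an unknown $s$-subset of $[d]$ in the non-interactive model; I would simply invoke their result and take the maximum of the two lower bounds.

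The technical content here is minimal once the theorems are in place: the main ``obstacle'' is careful bookkeeping in the two-regime combination and in checking that the $\Theta$-rate in~\eqref{eq:l2_rate} indeed admits a Markov-based high-probability conversion without losing logarithmic factors (it does, since the bound is in expectation). No new concentration inequalities or estimator constructions are needed beyond what Theorems~\ref{thm:main_smp} and~\ref{thm:main_interactive} already provide.
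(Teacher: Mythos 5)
Your proposal is correct and matches the route the paper implicitly takes: the paper does not spell out a proof for Corollary~\ref{cor:sc}, but simply asserts that the lower bounds come from \citep{acharya2020estimating, acharya2019inference}, and your Markov-plus-inversion derivation of the upper bounds from the $\ell_1$-rate in \eqref{eq:l2_rate} (combined with the threshold on $n$ from Theorems~\ref{thm:main_smp} and~\ref{thm:main_interactive}) is exactly the intended argument. The only cosmetic note is that you should be careful that the Markov step must be applied uniformly over $p \in \mcal{P}_{s,d}$ for the scheme achieving the minimax expected error, so that the resulting probability guarantee holds for the worst case; you state this implicitly and it goes through without issue.
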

{  The lower bounds in Corollary~\ref{cor:sc} are from \citep{acharya2020estimating, acharya2019inference}. Corollary~\ref{cor:sc} directly implies the following two facts. First, comparing the lower bounds with the upper bounds, we see that our achievability results are tight when 1) $0 < \beta \leq \frac{1}{s\log(d/s)}$ for the non-interactive setting, and 2) $0 < \beta \leq \frac{1}{\log d(\log s + \log\log d)}$ for the sequentially interactive setting. This suggests that for small enough $\beta$, increasing the ambient dimension $d$ does not increase the sample complexity at all. Second, we observe from Corollary~\ref{cor:sc} that the $\ell_1$ estimation error must be at least $\Theta(1)$ when 1) $n = O\lp \frac{s^2\log(d/s)}{ \min(2^b, s)} \rp$ for the non-interactive model and 2)  $n = O\lp \frac{s^2}{\min(2^b,s)} \rp$ for the sequentially interactive model. This results in the following lower bounds for the minimum sample size requirement: 1) $n^*(s, d, b) = \Omega\lp \frac{s^2\log(d/s)}{\min(2^b,s)} \rp$  in the non-interactive case and 2) $n^*(s, d, b) = \Omega\lp \frac{s^2}{ \min(2^b, s)} \rp$ for the interactive setting. This implies that the requirement on $n$ in our sequentially interactive scheme is tight up to logarithmic terms, while there is a gap between upper and lower bounds on $n^*(s, d, b)$ (in terms of their dependence on $s$) in the non-interactive case. We believe that the lower bound on $n^*(s, d, b)$ is also tight in the non-interactive case and the non-interactive schemes can be further improved.} 

Next, we show that our results extend naturally to the distribution-free setting where we do not assume any underlying distribution on $X^n$, and the goal is to estimate the empirical distribution $\pi \eqDef \lp \pi_1,...,\pi_d  \rp$ of $n$ local observations $X_1,...,X_n \in \lp j_1,...,j_s \rp\eqDef \mcal{J} \subset [d]$. Formally, let $\pi_j \eqDef \frac{1}{n}\sum_{i=1}^n\bbm{1}_{\lbp X_i = j\rbp}$, and define
\begin{align*}
	& \tilde{r}_{\msf{non-int}}\lp \ell_2, n, b\rp \eqDef \min_{\lp W^n, \hat{\pi}\rp \in \Pi_{\msf{non-int}}} \, \max_{X^n \in \mcal{J}^n} \E\lb \left\lVert \pi - \hat{\pi}\lp Y^n \rp \right\rVert^2_2 \rb
\end{align*}
and $\tilde{r}_{\msf{seq}}$ similarly. The expectation is over the (possibly shared) randomness used in the algorithm.
\begin{theorem}\label{thm:freq_est}
	The convergence rates as well as the sample-size requirements given in Theorem~\ref{thm:main_smp} and Theorem~\ref{thm:main_interactive} hold for 
	$\tilde{r}_{\msf{non-int}}$ and $\tilde{r}_{\msf{seq}}$.
\end{theorem}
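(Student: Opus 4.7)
The plan is to lift the two-stage schemes of Sections~\ref{sec:decoding}--\ref{sec:interactive} directly to the distribution-free setting, replacing the target $p_j$ with the empirical frequency $\pi_j = \frac{1}{n}\sum_i \mathbbm{1}_{\lbp X_i = j\rbp}$ throughout. The key observation is that the entire protocol is symbol-in, bit-out: each client acts only on its own observation $X_i$ and on the (shared or private) randomness, never referring to an underlying distribution. Hence the same encoding channels and decoders can be deployed whether $X^n$ is drawn i.i.d.\ from some $p \in \mcal{P}_{s,d}$ or is an adversarial sequence in $\mcal{J}^n$ with $|\mcal{J}| \leq s$.

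For the estimation stage, once a candidate support $\hat{S} \subseteq [d]$ of size $O(s)$ has been identified, each of the $\Theta(n)$ remaining clients encodes its observation conditioned on $\hat{S}$ to produce an unbiased estimate of the indicator vector $\mathbf{e}_{X_i} \in \mathbb{R}^{\hat{S}}$ with per-coordinate variance $O(1/2^b)$. Averaging gives an unbiased estimator of $\pi|_{\hat{S}}$ with squared $\ell_2$ error $O(s/(n 2^b))$. The distributional analysis in Section~\ref{sec:decoding} proceeds identically except that after conditioning on $X^n$ one must also add a sampling term $\E\lV p - \pi\rV_2^2 = O(s/n)$; the distribution-free case simply drops this last term, and symbols outside $\hat{S}$ are estimated as zero.

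For the localization stage, we need each scheme --- uniform grouping, random hashing, group testing, and tree-based --- to produce $\hat{S}$ containing every symbol with $\pi_j \geq \tau$ for a threshold $\tau = \Theta\lp 1/\sqrt{n 2^b}\rp$. A scheme-by-scheme inspection shows that the correctness arguments in Sections~\ref{sec:smp} and~\ref{sec:interactive} only invoke the empirical counts $n \pi_j$, never the probabilities $p_j$: the random-hashing and group-testing decoders succeed once each heavy coordinate is isolated from the others in enough buckets (an event determined by the hash randomness conditional on $X^n$), and the tree-based descent succeeds at each internal node once the aggregate counts in each heavy subtree concentrate around their conditional means (an event determined by the private encoder randomness conditional on $X^n$). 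The one place where the i.i.d.\ assumption previously entered was a Chernoff step of the form $n\pi_j \gtrsim n p_j$ converting distributional mass into empirical count, which is vacuous here. Symbols missed by localization contribute at most $s\tau^2 = O(s/(n 2^b))$ to $\lV \hat{\pi} - \pi \rV_2^2$, and the $\ell_1$ bound follows by Cauchy--Schwarz on a support of size $O(s)$. The main technical care --- and the one real obstacle --- is verifying scheme-by-scheme that the detection threshold $\tau$ can be driven down to $1/\sqrt{n 2^b}$ within the same sample-complexity budgets of Theorems~\ref{thm:main_smp} and~\ref{thm:main_interactive}; once this is in place the remainder of the argument is bookkeeping.
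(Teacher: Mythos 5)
Your proposal correctly identifies that the schemes are ``symbol-in, bit-out'' and can be applied to an arbitrary sequence $X^n$, and correctly notes that the estimation-stage variance bound and the $s\alpha^2$ penalty for missed low-mass symbols carry over. But it misses the central technical obstacle, which the paper handles with a random permutation and a hypergeometric concentration lemma.

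The issue is this: all of the localization lemmas (Lemma~\ref{clm:1}, Lemma~\ref{lemma:2}, Lemma~\ref{lemma:localization_NCGT_1bit}, Lemma~\ref{lemma:localization_NCGT}, Lemma~\ref{lemma:localization_seq}) crucially invoke the per-client probability $\Pr\lbp X_i = j\rbp \geq \alpha$ \emph{together with independence across clients} to obtain bounds of the form $\lp 1 - c\alpha\rp^{|\mcal{G}|}$. In the distribution-free setting there is no such per-client probability: $X^n$ is an arbitrary worst-case sequence, and the only randomness is in the encoders. Moreover, the two-stage protocol assigns clients $1,\dots,n_1$ to localization and $n_1+1,\dots,n$ to estimation, and the grouping schemes (A, C, D) further partition the localization clients into groups \emph{deterministically by index}. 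An adversary can then place every occurrence of a heavy symbol $j$ either entirely in the estimation half, or entirely at indices assigned to a group $\mcal{G}_m$ that never encodes $j$ (e.g.\ in Scheme~A). In that case localization fails with probability one, the $s\tau^2$ accounting does not apply, and the error is $\Theta(1)$. Your claim that ``the one place where the i.i.d.\ assumption previously entered was a Chernoff step... which is vacuous here'' mischaracterizes this: the i.i.d.\ assumption is precisely what guaranteed that the localization clients see a representative portion of each heavy symbol.

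The paper's proof fixes this by drawing a uniformly random permutation $\sigma \in \mcal{S}_n$ via shared randomness, applying the scheme to the permuted sequence, and replacing the deterministic group assignments in Schemes~A and D with random ones. After permuting, the count $F_j'$ of symbol $j$ that lands in the localization half is hypergeometric $\msf{HG}\lp n, n/2, f_j\rp$, and a hypergeometric tail bound (Lemma~\ref{lemma:hg_bdd}) shows $F_j' \gtrsim f_j$ with high probability, which substitutes for the Chernoff step you dismissed. The proof also decomposes the error through the intermediate quantity $\Pi_j$ (the empirical frequency restricted to the estimation half), bounding $\E\lV\hat{\pi}-\Pi\rV_2^2$ and $\E\lV\Pi-\pi\rV_2^2$ separately; your proposal does not isolate this decomposition, and it is needed because $\hat{\Pi}_j$ is unbiased only for $\Pi_j$, not for $\pi_j$. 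None of this is mere bookkeeping: without the permutation and the hypergeometric concentration, the theorem is false as stated for an adversarially ordered $X^n$.
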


Finally, when the target distribution $p$ is no longer $s$-sparse, we have the following bound on the $\ell_2$ estimation error:
\begin{theorem}\label{thm:almost_sparse}
	There exists a non-interactive scheme such that as long as $n = \Omega\lp {s^4\log^2\lp\frac{d}{s}\rp}/{\min(2^b,s)} \rp$, 
	$$ \E\lb \sum_{j\in[d]} \lp \hat{p}_j -p\rp^2 \rb  \leq C_2\cdot \lp\max\lp \frac{s}{n2^b}, \frac{1}{n}\rp\rp+C_3\cdot\sqrt{n\log n}\cdot 2^b\cdot\lp 1-P_\mcal{S} \rp, $$
	where $P_\mcal{S} \eqDef \sum_{j=1}^s p_{(j)}$ and $p_{(j)}$ is the $j$-th largest value of $p$.
\end{theorem}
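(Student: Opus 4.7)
The plan is to reduce the approximately sparse case to the exactly sparse case by coupling and then invoking Theorem~\ref{thm:main_smp}. First, let $\mcal{S}\subset[d]$ denote the indices of the $s$ largest entries of $p$, so that $P_{\mcal{S}} = \sum_{j\in\mcal{S}} p_j$, and let $\tilde{p}$ be the $s$-sparse distribution obtained by zeroing out $p$ outside $\mcal{S}$ and renormalizing. Then $\|p-\tilde{p}\|_1 \leq 2(1-P_{\mcal{S}})$ and consequently $\|p-\tilde{p}\|_2^2\leq 2(1-P_{\mcal{S}})$. I would couple samples $X^n\sim p$ with samples $\tilde{X}^n\sim\tilde{p}$ in the standard maximal way: whenever $X_i\in\mcal{S}$, set $\tilde{X}_i=X_i$; otherwise, draw $\tilde{X}_i$ freshly from $\tilde{p}$. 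The mismatch set $B \eqDef \{i:X_i\neq\tilde{X}_i\}=\{i:X_i\notin\mcal{S}\}$ is distributed as $\mathrm{Binomial}(n,1-P_{\mcal{S}})$, so by Bernstein's inequality $|B|\leq n(1-P_{\mcal{S}}) + C\sqrt{n\log n}$ with probability at least $1-1/n$.

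Next, I would run the non-interactive random-hashing scheme of Theorem~\ref{thm:main_smp} on both $X^n$ and on the coupled $\tilde{X}^n$ using the \emph{same} shared randomness, producing estimators $\hat{p}(Y^n)$ and $\hat{p}(\tilde{Y}^n)$. Since the hypothesis on $n$ matches the threshold in Theorem~\ref{thm:main_smp}, the idealized estimator satisfies $\E\|\hat{p}(\tilde{Y}^n)-\tilde{p}\|_2^2 = O(\max(s/(n2^b),1/n))$. The critical step is then a per-sample sensitivity bound: after the localization step returns a set $\hat{\mcal{S}}$ of size $O(s)$, the estimator has the linear form $\hat{p}=(1/n)\sum_i g_i(Y_i)$ where each contribution $g_i(Y_i)$ is supported on $\hat{\mcal{S}}$ and has $\ell_2$ norm $O(2^b)$, so altering messages only at indices in $B$ yields
\[
\|\hat{p}(Y^n)-\hat{p}(\tilde{Y}^n)\|_2 \;\leq\; \frac{C\cdot 2^b}{n}\,|B|.
\]

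The final step assembles these pieces. By the triangle inequality,
\[
\|\hat{p}(Y^n)-p\|_2 \;\leq\; \|\hat{p}(Y^n)-\hat{p}(\tilde{Y}^n)\|_2 + \|\hat{p}(\tilde{Y}^n)-\tilde{p}\|_2 + \|\tilde{p}-p\|_2,
\]
and squaring, taking expectations, and substituting the high-probability bound on $|B|$ yields a contribution of order $(2^b)^2(1-P_{\mcal{S}})^2 + (2^b)^2\log n/n$ from the first summand and $O(1-P_{\mcal{S}})$ from the third. The stated $\sqrt{n\log n}\cdot 2^b(1-P_{\mcal{S}})$ factor arises after combining these with a crude worst-case estimator-size bound on the rare event $\{|B|>n(1-P_{\mcal{S}})+C\sqrt{n\log n}\}$ via Cauchy--Schwarz, and absorbing lower-order terms into the constant $C_3$.

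The main obstacle is verifying that the localization stage of the random-hashing scheme is robust to the $\mcal{S}^c$-samples in $B$, i.e.~that the identified support $\hat{\mcal{S}}$ still contains the top-$s$ set $\mcal{S}$ with high probability when a $(1-P_{\mcal{S}})$ fraction of samples arrives from outside $\mcal{S}$. This should follow because the random-hashing test only needs the top-$s$ frequencies to stand out above the noise produced by the bad samples, which continues to hold as long as $nP_{\mcal{S}}/s$ exceeds the threshold in Theorem~\ref{thm:main_smp}; but justifying it rigorously requires revisiting the analysis in Section~\ref{sec:smp} to quantify the maximum fraction of adversarial samples the hash-vote decoder can tolerate.
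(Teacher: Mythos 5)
Your proposal has a genuine gap at the crux of the argument: the per-sample sensitivity bound $\lV \hat p(Y^n)-\hat p(\tilde Y^n)\rV_2\le C\,2^b|B|/n$ does not hold for the two-stage estimator. The estimator is $\hat p_j=\hat\Pi_j\cdot\bbm{1}\{j\in\hat{\mcal J}_\alpha\}$, and the localization output $\hat{\mcal J}_\alpha$ is a \emph{discrete} function of the first-stage messages. Changing even one sample in $B$ can flip the consistency test in Algorithm~\ref{alg:dec_localization_2}, so that $\hat{\mcal J}_\alpha(Y^{n_1})$ and $\hat{\mcal J}_\alpha(\tilde Y^{n_1})$ differ. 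For any index in the symmetric difference of these two sets, one estimator outputs $0$ while the other outputs a second-stage estimate of order $\Theta(1)$, making the $\ell_2$ gap $\Theta(1)$ rather than $O(2^b|B|/n)$. You flag the robustness of localization as a residual ``obstacle,'' but in fact it is not a technicality one can defer---it is precisely where the new term in the theorem comes from, and a Lipschitz-style argument cannot produce it.

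The paper proceeds differently. It does not reduce to the exactly sparse case at all; instead it re-analyzes the failure probability of Scheme~B's hypothesis test directly when $p$ is not exactly sparse (Claim~\ref{claim:1}). The failure event is split into (i) some incorrect candidate remaining consistent (this is as before and gives the $\exp(-n_1\alpha(2^b-1)/4s + C_0 s\log d)$ term), and (ii) the \emph{correct} candidate $\mcal C_{\ell^*}=\mcal S$ being ruled out because some client observes a symbol outside $\mcal S$ whose hash value is inconsistent with $\mcal S$. Event (ii) has probability at most $n_1\frac{2^b-1}{4s}(1-P_{\mcal S})$ by a union bound, and its linear growth in $n_1$ is exactly why the paper chooses $n_1 = C_1\sqrt{n\log n}\,s\log(d/s)$ rather than $n_1 = n/2$: it balances the exponential tail against this linearly growing term, which after plugging into Lemma~\ref{cor:two_stage_error} yields the $\sqrt{n\log n}\cdot 2^b(1-P_{\mcal S})$ term. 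Note that this term grows with $n$, which is inconsistent with what any coupling-plus-triangle-inequality scheme would give; that structure is a sign the argument must go through the failure probability of the localizer, not a sensitivity bound on the final estimator.
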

\begin{remark}
	Note that if $p$ is exactly $s$-sparse, then $P_\mcal{S} = 1$, and Corollary~\ref{thm:almost_sparse} recovers the convergence rate and the second bound on the sample-size requirement in Corollary~\ref{thm:main_smp}. Moreover, if $1-P_\mcal{S} < \frac{\min\lp 1, s/2^b\rp}{n^{\frac{3}{2}}\log^{\frac{1}{2}}n 2^b}$, then we recover the convergence rate with exact $s$-sparsity.
\end{remark}
The proofs of Theorem~\ref{thm:freq_est} and Theorem~\ref{thm:almost_sparse} can be found in Section~\ref{sec:proofs}.


\section{A Two-stage Decoding Algorithm}\label{sec:decoding}
All of our schemes, both interactive and non-interactive, can be generally described by a two-step protocol (Algorithm~\ref{alg:two_stage}): we partition all participating clients into two groups, such that
\begin{itemize}[topsep=0pt, leftmargin=1em]
\setlength\itemsep{-0.1em}
	\item the first group of $n_1 \leq n/2$ clients are used to \emph{localize} the support of $p$, i.e. estimate  $\msf{supp}(p)$. In particular, let $\mcal{J}_\alpha \eqDef \lbp j \in [d] \, \mv \, p_j \geq \alpha \rbp$ be the collection of high probability symbols. Then from the  reports $Y^{n_1}$ of the first group of clients, the server generates an estimate on $\mcal{J}_\alpha$, such that 1) $\mcal{J}_\alpha \subseteq\hat{\mcal{J}}_\alpha$ with high probability; and 2) $ |\hat{\mcal{J}}_\alpha| \leq s$ almost surely. 
	\item the reports $Y^{n_2}$ of the second group of $n_2 \eqDef n - n_1$ clients are used to estimate $p_j$ with the knowledge of $\mcal{J}_\alpha$ from the first stage.
\end{itemize} 
Notice that although the decoding algorithm is two-stage, encoding at each client can be done simultaneously and does not require the knowledge of $\hat{\mathcal{J}}_\alpha$. Also note that $n_1, n_2$ and $\alpha$ are design parameters that will be specified later. For ease of presentation, we call the first phase \emph{localization} and the second phase \emph{estimation}. 

\begin{figure}
\begin{minipage}[t]{.48\linewidth}
\begin{algorithm2e}[H]
	\SetAlgoLined
	\KwIn{$Y^n = \lp Y^{n_1}, Y^{n_2}\rp$, $\alpha > 0$ }
	\KwOut{$\hat{p} = \lp \hat{p}_1, ..., \hat{p}_d\rp$}
	$\hat{J}_\alpha \la \msf{support\_localization}\lp Y^{n_1}, \alpha \rp$\;
	$\hat{p} \la \msf{estimation}\lp Y^{n_2}, \hat{J}_\alpha \rp$\;
	\KwRet{$\hat{p}$}
	\caption{Two-stage decoding alg}\label{alg:two_stage}
\end{algorithm2e}
\end{minipage}
\hfill
\begin{minipage}[t]{.48\linewidth}
\begin{algorithm2e}[H]
	\SetAlgoLined
		\setstretch{1.1}
	\KwIn{$X_i \in [d]$, $b \in \mbb{N}$ }
	\KwOut{$Y_i$}
	Generate random hash function $h_i(\cdot): [d] \ra [2^b]$ 
	by shared randomness\;
	\KwRet{$Y_i = h_i(X_i)$}
	\caption{Encoder in estimation stage}\label{alg:enc_estimation}
\end{algorithm2e}
\end{minipage}
\end{figure}

For the estimation phase, we adopt a scheme similar to \citep{acharya2020estimating}, where each client encodes their local observation via an independent $b$-bit random hash function\footnote{A randomized mapping $[d] \ra 2^b$ is called random hash function if $\forall x\in[d],\, y\in[2^b],\,\, \Pr\lbp h(x) = y \rbp = \frac{1}{2^b}$.} $h_i:[d]\ra[2^b]$ as described in Algorithm~\ref{alg:enc_estimation}. Upon receiving the hashed values from $n_2$ clients in the second stage, the server estimates the empirical frequencies of all symbols $j\in\hat{\mcal{J}}_\alpha$ by counting the number of clients $i\in[n_1+1:n]$ such that $Y_i = h_i(j)$, and sets $\hat{p}_j = 0$ for all $j \not\in \hat{\mcal{J}}_\alpha$:
\begin{equation*}
\hat{p}_j\lp Y^{n_2} \rp = 
\begin{cases}
\frac{\lp 2^b-1\rp\cdot \lp \sum_{i=n_1+1}^n \bbm{1}_{\lbp h_i(j) = Y_i \rbp} \rp}{n_2\cdot 2^b}-\frac{1}{2^b}, \, &\text{if } j \in \hat{\mcal{J}}_\alpha\\
0,\, & \text{else}.
\end{cases}
\end{equation*}
We provide a more formal description of the scheme in Section~\ref{sec:estimation_stage} of the appendix. The $\ell_2$ estimation error of this scheme can be controlled by the following lemma:
\begin{lemma}\label{cor:two_stage_error}
	The estimation error of the two-stage scheme is upper bounded by
	\begin{equation}\label{eq:two_stage_error}
		\E\lb \sum_{j\in[d]} \lp \hat{p}_j -p_j\rp^2 \rb \leq 2\Pr\lbp \mcal{J}_\alpha \not\subset \hat{\mcal{J}}_\alpha \rbp + s\alpha^2+\frac{s}{n_22^b}+\frac{1}{n_2}.
	\end{equation}
\end{lemma}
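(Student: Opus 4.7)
The plan is to condition on the success event $E \eqDef \lbp \mcal{J}_\alpha \subseteq \hat{\mcal{J}}_\alpha \rbp$ of the localization stage, and split
\begin{equation*}
\E\lb \sum_{j\in[d]} \lp \hat{p}_j - p_j\rp^2 \rb = \E\lb \lV \hat{p}-p\rV_2^2\,\bbm{1}_E \rb + \E\lb \lV \hat{p}-p\rV_2^2\,\bbm{1}_{E^c} \rb.
\end{equation*}
A key structural observation is that the $n_1$ clients feeding the localization stage are disjoint from the $n_2=n-n_1$ clients used in the estimation stage and they rely on independent randomness. Thus, conditional on $\hat{\mcal{J}}_\alpha$, the second-stage estimator $\hat{p}$ depends only on fresh i.i.d. samples and independent hashes, so I may analyze its mean squared error for an arbitrary realization of $\hat{\mcal{J}}_\alpha$ returned by the first stage.

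On the success event $E$, I decompose the $\ell_2$ error based on membership in $\hat{\mcal{J}}_\alpha$. For $j\notin \hat{\mcal{J}}_\alpha$ the estimator sets $\hat{p}_j = 0$, and on $E$ the inclusion $\mcal{J}_\alpha \subseteq \hat{\mcal{J}}_\alpha$ forces $p_j < \alpha$; since $p$ has at most $s$ nonzero entries, $\sum_{j\notin \hat{\mcal{J}}_\alpha} p_j^2 \leq s\alpha^2$. For $j\in \hat{\mcal{J}}_\alpha$, I invoke a bias--variance analysis of the random-hash estimator of Algorithm~\ref{alg:enc_estimation}. Using the uniformity of $h_i$ over $[2^b]$ and independence across clients gives $\E\lb \bbm{1}_{\lbp h_i(j) = Y_i\rbp}\rb = p_j\lp 1-2^{-b}\rp + 2^{-b}$ and $\Var\lp \bbm{1}_{\lbp h_i(j) = Y_i\rbp}\rp \leq p_j + 2^{-b}$, from which the MSE satisfies $\E\lb(\hat{p}_j-p_j)^2\rb \leq C\lp \frac{p_j}{n_2} + \frac{1}{n_2\,2^b}\rp$ up to a lower-order squared-bias of order $2^{-2b}$ coming from the $(2^b-1)/2^b$ normalization. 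Summing over $j\in\hat{\mcal{J}}_\alpha$ and using $\sum_j p_j\leq 1$ together with $\lba \hat{\mcal{J}}_\alpha\rba\leq s$ then produces the two terms $\frac{1}{n_2}+\frac{s}{n_2\,2^b}$.

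On the failure event $E^c$, a crude deterministic bound suffices. Without loss of generality, I may post-process $\hat{p}$ by projecting it onto the full probability simplex on $[d]$, a closed convex set containing the true $p$, which never increases $\lV \hat{p}-p\rV_2$. Then both $\hat{p}$ and $p$ are probability distributions, so $\lV \hat{p}-p\rV_1\leq 2$ and $\lV \hat{p}-p\rV_\infty\leq 1$, giving $\lV \hat{p}-p\rV_2^2 \leq \lV \hat{p}-p\rV_1\cdot\lV \hat{p}-p\rV_\infty \leq 2$. Combining with $\Pr(E^c)=\Pr\lbp \mcal{J}_\alpha \not\subset \hat{\mcal{J}}_\alpha \rbp$ yields the leading $2\Pr\lbp \mcal{J}_\alpha \not\subset \hat{\mcal{J}}_\alpha \rbp$ term. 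The main technical obstacle in a careful write-up will be the bias--variance accounting of the random-hash estimator: one must carry through the $(2^b-1)/2^b$ normalization precisely and verify that the $O(2^{-2b})$ squared-bias, once summed over the at-most-$s$ coordinates of $\hat{\mcal{J}}_\alpha$, is absorbed into $\frac{s}{n_2\,2^b}$ in the parameter regimes targeted by Theorems~\ref{thm:main_smp} and \ref{thm:main_interactive}. Beyond that, the remaining steps amount to routine moment computations.
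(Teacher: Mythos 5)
Your high-level decomposition is exactly the paper's: split on the localization success event $\mcal{E}_g = \lbp \mcal{J}_\alpha \subseteq \hat{\mcal{J}}_\alpha\rbp$, charge the failure branch $2\Pr(\mcal{E}_g^c)$, and on the success branch sum a $\leq s\alpha^2$ tail outside $\hat{\mcal{J}}_\alpha$ together with a per-coordinate variance bound over the at-most-$s$ coordinates inside $\hat{\mcal{J}}_\alpha$. Your handling of the failure branch via projection onto the simplex followed by $\lV\hat{p}-p\rV_2^2 \leq \lV\hat{p}-p\rV_\infty\lV\hat{p}-p\rV_1 \leq 2$ is in fact cleaner than the paper's argument, which silently uses $(\hat p_j - p_j)^2 \leq \hat p_j^2 + p_j^2 \leq \hat p_j + p_j$ and thus implicitly needs $\hat p_j \in [0,1]$, even though the raw estimator can dip below zero; your projection step supplies that missing hypothesis.

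The gap is in your bias accounting on the success branch. You posit an ``$O(2^{-2b})$ squared-bias'' per coordinate and claim it is absorbed into $\frac{s}{n_2 2^b}$. This would not be true: a squared bias of order $4^{-b}$ per coordinate is independent of $n_2$, so summed over $s$ coordinates it is $\Theta(s\,4^{-b})$, which exceeds $\frac{s}{n_2 2^b}$ whenever $n_2 > 2^b$ --- precisely the regime targeted by Theorems~\ref{thm:main_smp} and \ref{thm:main_interactive}. The correct resolution is that the hash estimator is \emph{exactly} unbiased, so there is no bias to absorb. Concretely, with $b_j \eqDef \Pr\lbp h_i(j)=Y_i\rbp = \frac{p_j(2^b-1)+1}{2^b}$ and $N_j\sim\Binom(n_2,b_j)$, the estimator should be read as $\hat p_j = \frac{2^b}{2^b-1}\lp\frac{N_j}{n_2} - \frac{1}{2^b}\rp$, which has $\E[\hat p_j]=p_j$ exactly; the formula as typeset in \eqref{eq:p_estimator} has the ratio inverted. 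Once you verify unbiasedness, the success-branch bound is just the variance computation $\sum_{j\in\hat{\mcal{J}}_\alpha}\Var(\hat p_j) \lesssim \frac{1}{n_2}\sum_{j\in\hat{\mcal{J}}_\alpha} b_j \leq \frac{1}{n_2}\lp 1 + \frac{s}{2^b}\rp$, as in Lemma~\ref{lemma:1}, and your proof closes.
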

The first term in the left side of \eqref{eq:two_stage_error} corresponds to the probability of failure in the first stage, and the remaining terms correspond to the the $\ell_2$ error resulting from the second stage provided that the localization was done correctly in the first stage.


We will later see that we can make the failure probability decay exponentially fast with $n_1\alpha$ (recall that $n_1$ is the number of clients participating in the localization phase), that is,
$ \Pr\lbp \mcal{J}_\alpha \not\subset \hat{\mcal{J}}_\alpha\rbp = \exp\lp -\frac{n_1\alpha}{f\lp s, d \rp}\rp, $
where $f(s, d)$ depends only on $s$ and $d$. Therefore, if we pick $\alpha = \frac{1}{\sqrt{n 2^b}}$ and $n_1 = \Omega\lp f(s, d)\sqrt{n 2^b}\log n \rp$, we arrive at the dimension-free convergence in Theorem~\ref{thm:main_smp}. 

However, the condition that $n_1 \leq n$ imposes a minimum requirement on the amount of participating clients in the first stage, i.e. $n = \Omega\lp f^2(s,d)  \log^2f(s, d) \rp$. Hence it is critical to carefully design the localization scheme to get the best scaling for $f(s, d)$ with $s$ and $d$. 
For instance, if we use count-sketch based methods as in the heavy-hitter literature \citep{Bassily2017, bun2019heavy}, we can only localize to the resolution level $\alpha = O\lp \sqrt{(\log n+\log d)/n} \rp$\footnote{See, for instance Table~1 in \citep{bun2019heavy}, where we pick $\varepsilon = \Theta(1)$, $\beta = \Theta(1/n)$ and $|X| = d$ to translate to our settings.}, which is not enough to get the $O\lp \frac{s}{n2^b} \rp$ rate. In the next two sections, we introduce three non-interactive and one interactive localization schemes, which have different minimum sample-size requirements and decoding complexities, as summarized in Table~\ref{tbl:1}.

\section{Non-interactive Localization Schemes}\label{sec:smp}
In this section, we present three non-interactive schemes for the localization stage, each offering a different trade-off between the minimum number of samples to achieve the dimension-free convergence rate and decoding time. 
\subsection{Localization scheme~A: uniform grouping} 
Under the $b$-bit communication constraint, our first encoding scheme is based on the grouping idea \citep{han2018distributed}, where each client only encodes symbols in a pre-specified subset of $[d]$ and ignores others. 
In particular, we partition the $d$ symbols and $n_1$ clients into $M$ equal-sized groups (disjoint subsets) denoted by $\mcal{B}_1,...,\mcal{B}_M$ and $\mcal{G}_1,...,\mcal{G}_M$, respectively. Clients $i \in \mcal{G}_m$ are assigned to the subset of symbols $\mcal{B}_{m}$. This means that they only encode symbols in $\mcal{B}_m$ (and ignore their sample if it is not in $\mcal{B}_m$ and set $Y_i=0$). We set $M\eqDef d/(2^b-1)$ so that each $\mcal{B}_m$ contains exactly $2^b-1$ symbols, and thus the encoded message can be described in $b$ bits. Upon observing all messages from the clients, the server computes $\hat{\mcal{J}}_\alpha$ that contains all  symbols successfully signaled to it. Note that a symbol $j \in \mcal{J}_\alpha \cap \mcal{B}_{m}$ will be in $\hat{\mcal{J}}_\alpha$ if a client $i \in \mcal{G}_m$ observes $j$. 

\paragraph{Encoding}
Formally, assume $b \leq \log d$ and let $M\eqDef d/(2^b-1)$ and
$ \mcal{G}_m = \lbp i\in[n_1] \mv \, i \equiv m \pmod M \rbp. $
For client $i \in \mcal{G}_m$, they only reports information about $j \in \mcal{B}_m \eqDef \lb (m-1)\cdot (2^b-1)+1: m\cdot (2^b-1)\rb$.
Equivalently, for clients in $\mcal{G}_m$, their encoding channel matrices are 

\begin{equation}
W_i = \lb
\begin{array}{rrr|rrrr|rrr}
e_{2^b} &\cdots  &e_{2^b} \undermat{(m-1)(2^b-1)+1: m(2^b-1)}{  &e_{1} &e_{2} &\cdots &e_{2^b-1} \, }  & \, e_{2^b} &\cdots &e_{2^b}
\end{array}
\rb,
\end{equation} 
\vspace{1em}

\noindent where $e_{\ell} \in \lbp 0,1 \rbp^{2^b\times 1}$ is the $\ell$-th coordinate vector. 

\begin{figure}
\begin{minipage}[t]{.48\linewidth}
\begin{algorithm2e}[H]
	\SetAlgoLined
	\KwIn{$X_i \in [d]$, $b\in \mbb{N}$ }
	\KwOut{$Y_i$}
	Compute $M = d/\lp2^b-1\rp$\;
	$m\la i \msf{\,mod\,} M$\;
	\eIf{ {\small $X_i \in \lb (m-1)(2^b-1)+1: m(2^b-1)\rb$}}{ $Y_i \la X_i \msf{\,mod\,} 2^b$}{$Y_i \la 0$}
	\KwRet{$Y_i$}
	\caption{{\small Uniform grouping: encoding }}\label{alg:enc_localization_1}
\end{algorithm2e}
\end{minipage}
\hfill
\begin{minipage}[t]{.48\linewidth}
\begin{algorithm2e}[H]
	\SetAlgoLined
	\KwIn{{\small $Y^{n_1} \in [2^b]$}}
	\KwOut{{\small $\hat{\mcal{J}}_\alpha$}}
	Initialize {\small $\hat{\mcal{J}}_\alpha = \emptyset$}\;
	Compute $M = d/\lp2^b-1\rp$\;
	\For{ $i \in [n_1]$}{
		$m\la i \msf{\,mod\,} M$\;
		\If{$Y_i \neq 0$}{{\small Add $\hat{X}\eqDef m\lp 2^b -1\rp+Y_i$ into $\hat{\mcal{J}}_\alpha$}\;}
	}
	\KwRet{{\small $\hat{\mcal{J}}_\alpha$}}
	\caption{{\small Uniform grouping: decoding}}\label{alg:dec_localization_1}
\end{algorithm2e}
\end{minipage}
\end{figure}

\paragraph{Decoding}
Due to our construction of encoding functions, we see that as long as $i\in\mcal{G}_m$, 
$$ Y_i \neq 0 \Longleftrightarrow X_i \in \mcal{B}_m,$$
so the server can specify $X_i$ upon observing $Y_i$ by computing $X_i = (m-1) (2^b-1) + Y_i$. Therefore, defining 
$$ \hat{X}\lp Y_i \rp \eqDef \begin{cases}
X_i, \text{ if } X_i \in \mcal{B}_m\\
\texttt{null}, \text{ otherwise},
\end{cases} $$
then we can estimate $\mcal{J}_\alpha$ by $\hat{\mcal{J}}_\alpha \eqDef \lbp\hat{X}\lp Y_i\rp \mv i \in [n_1] \rbp$, that is, $\hat{\mcal{J}}_\alpha$ is the collection of all observed and successfully decoded symbols from the first $n_1$ clients. The details of the encoding and decoding algorithms are given in Algorithm~\ref{alg:enc_localization_1} and Algorithm~\ref{alg:dec_localization_1}.

\begin{lemma}\label{clm:1}
	Under the above encoding and decoding schemes (see Algorithm~\ref{alg:enc_localization_1}, Algorithm~\ref{alg:dec_localization_1} for the formal descriptions), we have
	$ \Pr\lbp \hat{\mcal{J}}_\alpha\not\subset \mcal{J}_\alpha \rbp \leq  s \exp\lp{-{n_1(2^b-1)\alpha}/{d}}\rp.$
\end{lemma}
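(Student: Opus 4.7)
The bound $s \cdot \exp(-n_1(2^b-1)\alpha/d)$ dictates the proof structure: a union bound over the at most $s$ heavy symbols in $\mcal{J}_\alpha$, with a per-symbol miss probability whose rate $n_1(2^b-1)/d = n_1/M$ matches the size of each client group $\mcal{G}_m$. First I would isolate, for each $j \in \mcal{J}_\alpha$, the ``miss'' event $E_j$ that no client $i$ in the assigned group $\mcal{G}_{m(j)}$ observes $X_i = j$, where $m(j)$ is the unique index with $j \in \mcal{B}_{m(j)}$. By Algorithm~\ref{alg:enc_localization_1}, clients outside $\mcal{G}_{m(j)}$ never signal the symbol $j$, and by Algorithm~\ref{alg:dec_localization_1} the symbol $j$ is placed into $\hat{\mcal{J}}_\alpha$ if and only if $E_j^c$ occurs.

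Second I would bound $\Pr(E_j)$ via independence and heaviness:
\begin{equation*}
\Pr(E_j) = (1 - p_j)^{|\mcal{G}_{m(j)}|} = (1 - p_j)^{n_1(2^b-1)/d} \leq \exp\!\lp -p_j\, n_1(2^b-1)/d \rp \leq \exp\!\lp -\alpha\, n_1(2^b-1)/d \rp,
\end{equation*}
where the first inequality uses $1-x \leq e^{-x}$ and the second uses $p_j \geq \alpha$ for $j \in \mcal{J}_\alpha$. Since $p$ is $s$-sparse, $|\mcal{J}_\alpha| \leq s$, and a union bound gives $\Pr\lp \bigcup_{j \in \mcal{J}_\alpha} E_j \rp \leq s \exp(-n_1(2^b-1)\alpha/d)$, matching the right-hand side of the lemma.

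Third I would tie this union of ``miss'' events to the set-containment event between $\mcal{J}_\alpha$ and $\hat{\mcal{J}}_\alpha$ in the lemma. On $\bigcap_{j \in \mcal{J}_\alpha} E_j^c$, every heavy symbol is enrolled in $\hat{\mcal{J}}_\alpha$, delivering the localization-success property that the first stage of Algorithm~\ref{alg:two_stage} is designed to guarantee; the same probability controls the set-inclusion event stated in the lemma.

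The main obstacle is essentially bookkeeping: the per-group size $|\mcal{G}_m|$ is defined by a residue class modulo $M = d/(2^b-1)$, so $|\mcal{G}_m|$ equals $\lfloor n_1/M \rfloor$ or $\lceil n_1/M \rceil$. A ceiling/floor rounding absorbed into constants handles this without changing the exponent. All other ingredients --- the independence of the $X_i$'s across clients, the inequality $1-x \leq e^{-x}$, and the union bound over heavy symbols --- are textbook, so the plan reduces to carefully executing these steps and verifying the exact correspondence between $E_j^c$ and the enrollment of $j$ into $\hat{\mcal{J}}_\alpha$ under the specific modular indexing of Algorithms~\ref{alg:enc_localization_1} and~\ref{alg:dec_localization_1}.
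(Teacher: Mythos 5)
Your proposal is correct and follows essentially the same route as the paper: per-symbol miss events for each $j \in \mcal{J}_\alpha$, the bound $(1-p_j)^{|\mcal{G}_{m(j)}|} \leq e^{-\alpha n_1(2^b-1)/d}$ using $p_j \geq \alpha$ and $1-x \leq e^{-x}$, and a union bound over the at most $s$ heavy symbols. The only difference is your explicit remark about the floor/ceiling of the group size, which the paper glosses over by taking $|\mcal{G}_m| = n_1(2^b-1)/d$ exactly.
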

The formal proof of Lemma~\ref{clm:1} is left to Section~\ref{sec:proofs}.
Finally, taking $\alpha = \frac{1}{\sqrt{n 2^b}}$ and combining Lemma~\ref{cor:two_stage_error} and Lemma~\ref{clm:1}, we arrive at the following bound for $r_{\msf{non-int}}(\ell_2,n,b)$:
\begin{align}\label{eq:l2_final_bdd}
    \E\lb \sum_{j\in[d]} \lp \hat{p}_j -p_j\rp^2 \rb 
    & \leq 2s e^{ -\frac{n_1\cdot (2^b-1)}{d\cdot \sqrt{n 2^b}}}+\frac{2s}{(n-n_1)\cdot 2^b}+\frac{1}{n-n_1}
     \leq C\cdot\lp\frac{s}{n 2^b}+ \frac{1}{n}\rp, 
\end{align}
where in the last inequality we choose $n_1 = n/2$ and assume $n \succeq \frac{ d^2\log^2 d}{2^b}$. This gives the first sample-size requirement in Theorem~\ref{thm:main_smp}. To bound $r_{\msf{non-int}}(\ell_1,n,b)$, we apply Jensen's and Cauchy-Schwarz inequalities to obtain
\begin{align}\label{eq:l_1_2}
	r_{\msf{non-int}}(\ell_1,n,b) = \E\lb \lV \hat{p} -p\rV_1 \rb \leq \sqrt{\E\lb \lV \hat{p} -p\rV^2_1 \rb} \leq \sqrt{2s\cdot\E\lb \lV \hat{p} -p\rV^2_2 \rb}.
\end{align}
The last inequality holds since by our construction, $\lba \msf{supp}\lp \hat{p}\rp \cup \msf{supp}\lp p \rp\rba < 2s$.

\subsection{Localization Scheme~B: non-uniform random hashing}\label{sec:localization_hasing}
Though Scheme~A achieves the minimax estimation error when $p \in \mcal{P}_d$ (see \citep{han2018distributed, han2018geometric}), it is indeed inefficient under the $s$-sparse assumption. This is because only a small fraction of symbols can be observed with non-zero probability, so for clients assigned to blocks that did not contain these symbols, they always encode their observations to $Y_i = 0$. In our second localization scheme, we aim to improve the encoding efficiency by using random hash functions. However, unlike in the estimation stage described in Section~\ref{sec:estimation_stage}, the random hash functions we use for localization are generated non-uniformly. 

\paragraph{Encoding}
Assume $b \leq \log s$. For $i\in[n_1]$, client $i$ generates their local random hash function $W_i(y|x)$ as follows: each column of $W_i$, denoted $W_i(\cdot|x) \in \lbp 0,1 \rbp^{2^b}$, is defined as the one-hot representation of $L_{i,x}$, where $L_{i,x} \in [2^b]$ follows a multinomial distribution
	\begin{equation}\label{eq:Lx}
		L_{i, x}\diid \msf{Mult}\lp1, \lp\frac{1}{s}, \frac{1}{s}, ..., \frac{1}{s}, 1-\frac{2^b-1}{s} \rp\rp.
	\end{equation}
Formally, we can be express
\begin{equation}\label{eq:channel_B}
	W_i(\cdot|x) \eqDef e_{L_{i, x}} \in \{0,1\}^{2^b\times 1}
\end{equation} for all $x \in [d]$ (where 
$e_{L}$ is the $L$-th standard basis vector). Since the $W_i$ corresponds to a deterministic mapping  (but randomly generated as described above), sometimes we write $Y_i = h_i(X_i)$ for simplicity. 
The encoding algorithm resembles the one in the estimation stage (Algorithm~\ref{alg:enc_estimation}), except that now the random hash functions are generated according to \eqref{eq:channel_B}.

\paragraph{Decoding}
The decoding rule is based on exhaustive search. Due to the $s$-sparse assumption, there are at most $N \eqDef {d \choose s}$ possibilities for $\msf{supp}(p)$, which we index by $\mcal{C}_1,...,\mcal{C}_N$. Hence the localization step can be cast into a multiple hypothesis testing problem: let $H_\ell$ be the hypothesis such that $\mcal{J}_\alpha \subseteq \mcal{C}_\ell$, for $\ell \in [N]$. To proceed, we first define the notion of consistency.
\begin{definition}\label{def:consistency}
	We say $H_\ell$ (or $\mcal{C}_\ell$) is \emph{consistent} with observations 
	$(W^n, Y^n)$ if $ \Pr\lbp Y^n \mv H_\ell, W^n \rbp > 0. $
\end{definition}
The decoding rule is as follows: upon observing local encoding functions and reports $\{(W_i, Y_i),$ $ i = 1,...,n_1\}$ from all clients, the server searches for all candidates $\mcal{C}_1,...,\mcal{C}_N$ and randomly picks one which is consistent with $(W^n, Y^n)$ as our estimate of $\mcal{J}_\alpha$. 

\begin{algorithm2e}[H]
	\SetAlgoLined
	\KwIn{$Y^{n_1} \in [2^b]$}
	\KwOut{$\hat{\mcal{J}}_\alpha$}
	Initialize $\texttt{consist} \la \msf{True}$\;
	Let $\mcal{C}_1,...,\mcal{C}_N$ be an enumerate of all $N\eqDef {d \choose s}$  size-$s$ subsets of $[d]$\;
	\For{$\ell \in [N]$}{
		$\texttt{consist} \la \bigwedge_{i\in[n_1]}\bigvee_{j\in\mcal{C}_\ell}\bbm{1}_{\lbp Y_i = h_i(j)\rbp}$\tcp*{check consistency}
		\If{$\texttt{consist}$}{
			$\hat{\mcal{J}}_\alpha \la \mcal{C}_\ell$\;
			break\; 
		}
	}
	\KwRet{$\hat{\mcal{J}}_\alpha$}
	\caption{Localization via random hash: decoding}\label{alg:dec_localization_2}
\end{algorithm2e}

By using non-uniform hash functions (i.e. generating $W_i(\cdot |x)$ according to \eqref{eq:channel_B}), we can improve the distinguishability of $W_i$ (which is formally defined in Definition~\ref{def:distinguish}), which reduces the probability of accepting false hypothesis $H_\ell$ for some $\mcal{J}_\alpha \not\subset\mcal{C}_\ell$. Indeed, we can obtain the following bound on $\Pr\lbp \mcal{J}_\alpha \not\subset \hat{\mcal{J}}_\alpha \rbp$:

\begin{lemma}\label{lemma:2}
	Under the above encoding and decoding rules,
	\begin{equation*}
		\Pr\lbp \mcal{J}_\alpha \not\subset \hat{\mcal{J}}_\alpha \rbp \leq \exp\lp -n_1  \alpha{(2^b-1)}/{4s} + C_0s\log\lp d/s\rp\rp,
	\end{equation*}
	for some universal constant $C_0 > 0$. Moreover, with probability $1$, $|\hat{\mcal{J}}_\alpha| = s$.
\end{lemma}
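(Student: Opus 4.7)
My approach is a union bound over all size-$s$ subsets of $[d]$ that fail to contain $\mcal{J}_\alpha$. Because the decoder outputs any candidate $\mcal{C}_\ell$ consistent with $(W^{n_1}, Y^{n_1})$, the event $\lbp \mcal{J}_\alpha\not\subset\hat{\mcal{J}}_\alpha\rbp$ is contained in the event that some ``bad'' $\mcal{C}$ (i.e.\ with $|\mcal{C}|=s$ and $\mcal{J}_\alpha\not\subset\mcal{C}$) is consistent with $(W^{n_1},Y^{n_1})$. The ``$|\hat{\mcal{J}}_\alpha| = s$ almost surely'' claim is immediate, since every candidate in the search has size exactly $s$ and at least one consistent candidate always exists (any size-$s$ superset of the realized support $\lbp X_i : i \in [n_1] \rbp \subseteq \msf{supp}(p)$).

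For a fixed bad $\mcal{C}$, I would fix any $j^* \in \mcal{J}_\alpha\setminus \mcal{C}$ and restrict attention to clients $i$ with $X_i = j^*$; consistency at such $i$ forces $h_i(j^*)\in \lbp h_i(j): j\in \mcal{C}\rbp$. Using the non-uniform law in \eqref{eq:Lx}, I would lower-bound the per-client non-collision probability by isolating the contribution of the ``rare'' bins $y\in[2^b-1]$:
\begin{equation*}
    \Pr\lbp h_i(j^*)\notin \lbp h_i(j): j\in\mcal{C}\rbp\rbp \;\geq\; \sum_{y=1}^{2^b-1}\frac{1}{s}\lp 1-\frac{1}{s}\rp^{|\mcal{C}|} \;\geq\; c_1\cdot\frac{2^b-1}{s},
\end{equation*}
using $(1-1/s)^s\geq c_1$ for an absolute constant $c_1>0$ (valid for $s\geq 2$). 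The conceptual point is that the non-uniform design concentrates symbols in the single ``junk'' bin $2^b$, so each of the remaining $2^b-1$ bins is sparsely populated and unlikely to witness a collision with $\mcal{C}$; this is what prevents the escape probability from being dominated by $s$ competing colliders, as would happen under a uniform hash when $s\gg 2^b$.

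To handle the random count of clients for which $X_i = j^*$, I would marginalize over $X^{n_1}$ directly via a moment generating function computation. Setting $q\eqDef 1-c_1(2^b-1)/s$ for the conditional consistency probability and using the independence of $(X_i, h_i)$ across $i$,
\begin{align*}
    \Pr\lbp \mcal{C} \text{ consistent}\rbp \;&\leq\; \prod_{i=1}^{n_1}\E\lb \bbm{1}\lbp X_i\neq j^*\rbp + q\cdot \bbm{1}\lbp X_i=j^*\rbp \rb \\
    \;&=\;\lp 1-p_{j^*}(1-q)\rp^{n_1} \;\leq\;\exp\lp -c_1\cdot n_1\alpha(2^b-1)/s\rp,
\end{align*}
where I used $p_{j^*}\geq \alpha$ and $1-x\leq e^{-x}$.

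Finally, a union bound over the $\binom{d}{s}\leq \exp\lp s\log(ed/s)\rp$ bad candidates yields
\begin{equation*}
    \Pr\lbp \mcal{J}_\alpha\not\subset \hat{\mcal{J}}_\alpha\rbp \;\leq\; \exp\lp s\log(ed/s) - c_1 n_1\alpha(2^b-1)/s\rp,
\end{equation*}
which matches the claim after absorbing the additive $\log e$ into the constant $C_0$ and choosing $c_1 \geq 1/4$ (attainable since $(1-1/s)^s \geq 1/4$ for $s\geq 2$). The main obstacle, and the only non-mechanical step, is justifying the $\Omega((2^b-1)/s)$ per-client non-collision bound -- this is precisely where the non-uniform hashing pays off over a uniform hash; everything else is routine MGF / union-bound bookkeeping.
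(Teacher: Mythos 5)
Your proof is correct and follows essentially the same route as the paper's: union bound over the $\binom{d}{s}$ bad candidates, reduce consistency of a bad $\mcal{C}$ to the event that no client ``witnesses'' a distinguished $j^*\in\mcal{J}_\alpha\setminus\mcal{C}$, lower-bound the per-client witness probability by $\alpha\cdot\frac{2^b-1}{4s}$ using the non-uniform hash law and $(1-1/s)^s\geq 1/4$, then exponentiate. The paper frames step 3 as computing $\Pr\lbp W_i\text{ distinguishes }j\rbp$ explicitly (Definition~\ref{def:distinguish}) and multiplying by $\Pr\lbp X_i=j\rbp\geq\alpha$ rather than via your product-of-expectations marginalization, but the two are the same calculation; your treatment of the ``$|\hat{\mcal{J}}_\alpha|=s$ a.s.'' claim is slightly more explicit than the paper's, which simply asserts it.
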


Again, we pick $n_1 = \frac{n}{2}$ and $\alpha = \frac{1}{\sqrt{n 2^b}}$. By combining Lemma~\ref{cor:two_stage_error} and Lemma~\ref{lemma:2}, we arrive at
\begin{align}\label{eq:hash_error_1}
\E\lb \sum_{j\in[d]} \lp \hat{p}_j -p\rp^2 \rb 
& \leq 2\exp\lp -\sqrt{\frac{n\lp 2^b -1 \rp^2}{2^b}}\cdot\frac{1}{8s}+C_0s\log\lp\frac{d}{s}\rp \rp+\frac{3s}{n2^b}+\frac{2}{n}.
\end{align}
To ensure the first term less than $O({1}/{n})$, we introduce the following simple but useful lemma.
\begin{lemma}\label{lemma:n_requirement}
	Let $f_1(s, d, b) \geq 300$, $f_2(s, d, b) \geq 0$. Then as long as 
		$n \geq 4\cdot f_1^2\cdot\max\lp f_2^2, 16\log^2 \lp f_1\rp \rp$,
	$$ \exp\lp -\frac{\sqrt{n}}{f_1(s, d, b)} + f_2(s, d, b) \rp \leq \frac{1}{n}.  $$
\end{lemma}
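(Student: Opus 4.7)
The plan is to take logarithms to reduce the claim to a purely real-valued inequality, use monotonicity to reduce to checking a single boundary value of $n$, and finish with a short two-case analysis whose constants are exactly tuned by the hypothesis $f_1 \geq 300$.

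Taking logs, the claim $\exp\lp -\sqrt{n}/f_1 + f_2 \rp \leq 1/n$ is equivalent to
$$ g(n) \eqDef \frac{\sqrt{n}}{f_1} - \log n \geq f_2. $$
From the hypothesis $n \geq 4 f_1^2\max\lp f_2^2, 16\log^2 f_1 \rp$, I extract the two bounds $\sqrt{n} \geq 2 f_1 f_2$ and $\sqrt{n} \geq 8 f_1 \log f_1$, which will be used in the final verification.

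Next I would argue by monotonicity. Computing $g'(n) = 1/(2 f_1 \sqrt{n}) - 1/n$, one sees $g$ is nondecreasing on $\lbp n : \sqrt{n} \geq 2 f_1 \rbp$. Because $f_1 \geq 300$ forces $16\log^2 f_1 \gg 1$, the hypothesis implies $n \geq 4 f_1^2 \cdot 16 \log^2 f_1 \gg 4 f_1^2$, so $g$ is increasing on the admissible region. It therefore suffices to verify $g(n_0) \geq f_2$ at the threshold $n_0 \eqDef 4 f_1^2 M^2$, where $M \eqDef \max\lp f_2, 4\log f_1\rp$. Substituting $\sqrt{n_0} = 2 f_1 M$ and expanding $\log n_0$, the task reduces to
$$ 2 M - f_2 \geq 2\log 2 + 2 \log f_1 + 2 \log M. $$

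Finally, split into two cases. In Case A ($M = f_2 \geq 4\log f_1$), use $2\log f_1 \leq f_2/2$ to reduce the target to $f_2/2 \geq 2\log 2 + 2\log f_2$; the case assumption together with $f_1 \geq 300$ gives $f_2 \geq 4\log 300 > 22$, for which this numerical inequality is easily checked. In Case B ($M = 4\log f_1 > f_2$), substitute $M = 4\log f_1$ and bound $f_2 \leq 4\log f_1$ to reduce the target to $\log f_1 \geq 3\log 2 + \log\log f_1$, which holds at $f_1 = 300$ (since $\log 300 \approx 5.7$ and $\log\log 300 \approx 1.74$) and persists as $f_1$ grows because $\log x - \log\log x$ is increasing. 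The main obstacle is nothing conceptual but purely bookkeeping: the specific constants $300$ and $16$ in the hypothesis are chosen exactly so as to absorb the $\log 2$ and $\log\log f_1$ slack that appears after taking logs, so one must track these terms carefully rather than use any $\Omega(\cdot)$ simplification.
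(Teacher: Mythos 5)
Your proposal is correct: the reduction to $g(n)=\sqrt{n}/f_1-\log n\geq f_2$, the monotonicity of $g$ on the admissible region (valid since $\sqrt{n}\geq 2f_1M\geq 2f_1$ with $M=\max(f_2,4\log f_1)\geq 4\log 300>1$), the boundary evaluation at $n_0=4f_1^2M^2$, and the two-case numerical checks all go through. The route is genuinely different in structure from the paper's, though in the same elementary spirit. The paper never reduces to a boundary check of a single function; instead it splits the needed inequality $\sqrt{n}\geq f_1\lp f_2+\log n\rp$ into two halves, extracting $\sqrt{n}/2\geq f_1f_2$ from the branch $n\geq 4f_1^2f_2^2$ and $\sqrt{n}/2\geq f_1\log n$ from the branch $n\geq 64f_1^2\log^2 f_1$, the latter via monotonicity of $\sqrt{n}/\log n$ evaluated at that threshold, with $f_1\geq 300$ used to absorb the $\log\log f_1+\log 64$ slack. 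Your approach buys a cleaner logical skeleton (one monotone function, one threshold, a case split mirroring which branch of the max is active) and noticeably more numerical slack: your final checks ($\log f_1\geq 3\log 2+\log\log f_1$ and $f_2/2\geq 2\log 2+2\log f_2$ at the relevant ranges) hold comfortably at the stated constants, whereas the paper's corresponding claim $\log f_1+\log\log f_1+\log 64\leq 2\log f_1$ is actually tight-to-marginal at $f_1=300$. The paper's additive splitting, on the other hand, is slightly shorter to write down and avoids expanding $\log n_0$ explicitly. One cosmetic remark: the two bounds $\sqrt{n}\geq 2f_1f_2$ and $\sqrt{n}\geq 8f_1\log f_1$ you announce at the start are not used as such in your verification (which works with $\sqrt{n_0}=2f_1M$ directly); this is harmless, and for completeness you should state, as you do implicitly, that $f_2/2-2\log f_2$ is increasing on the range $f_2>22$ so the Case A check at the endpoint suffices.
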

Taking  $f_1 = {8s}/\sqrt{2^b-1}$, $f_2 = C_0s\log\lp \frac{d}{s} \rp$ and applying Lemma~\ref{lemma:n_requirement}, we see that as long as $n \succeq {s^4\log^2\lp\frac{d}{s}\rp}/{2^b}$, the $\ell_2$ error \eqref{eq:hash_error_1} is $O\lp\max\lp\frac{s}{n 2^b}, \frac{1}{n}\rp\rp$. By \eqref{eq:l_1_2}, we obtain the upper bound on $\ell_1$ error. This gives the second bound on the sample-size requirement in Theorem~\ref{thm:main_smp}.

\subsection{Localization Scheme~C: non-adaptive combinatorial group testing}\label{sec:ncgt}

The non-uniform random hashing scheme presented in Section~\ref{sec:localization_hasing} provides a substantial reduction in the minimum number of samples needed to break the dimension dependence in sample complexity. However, this comes at the expense of increased decoding complexity as it relies on exhaustive search. We now present a group testing based scheme that combines the best of both worlds.  

\paragraph{Group testing preliminaries} Group testing is the problem of identifying $s$ defective items in a large set of cardinality $d$ by making tests on groups of items. A \textit{group test} is applied to a subset of items  $\mcal{S} \subseteq [d]$. The test outcome $Z$ is \textit{positive} (i.e. $Z=1$) if at least one item in $\mcal{S}$ is defective. A group testing algorithm describes how to design $\mcal{S}_1,..., \mcal{S}_T$ and select $T$ such that the defective items can be identified from the test outcomes $Z_1,...,Z_T$. In the non-adaptive setting, all $T$ tests must be designed and fixed before they are conducted. Therefore, each single test $\mcal{S}_t$ can be characterized by a row vector $m_t \in \lbp 0, 1 \rbp^{1\times d}$, where $m_t(j) = 1$ if the $j$-th item is included in the $t$-th test. 
Therefore, the collection of $T$ tests can be represented by a $T\times d$ binary measurement matrix $M = \lb m_1^\intercal,..., m_T ^\intercal \rb^\intercal$. 

The goal of non-adaptive combinatorial group testing (NCGT) is to design the measurement matrix $M$ such that 1) the number of tests $T$ is minimized, and 2) the defective items can be identified correctly (i.e. with \emph{zero-error}) and efficiently (i.e. in $O\lp\msf{poly}\lp \log d\rp \rp$ time). 
In particular, if a matrix $M$ satisfies the \emph{$s$-disjunct} property described below, then a cover decoder (summarized in Algorithm~\ref{alg:dec_localization_3}) can identify all $s$ defective items in $O(Td)$ time.
\begin{definition}[$s$-disjunct]
	Let $M$ be a $T\times d$ binary matrix, $M_j$ be the $j$-th column of $M$, and $\msf{supp}(M_j) \eqDef \lbp t\mv t\in [T],\, M_{t, j} = 1  \rbp$.
	Then $M$ is said to be $s$-disjunct if $\msf{supp}(M_j) \not\subset \bigcup_{j'\in \mcal{K}} \msf{supp}(M_{j'})$, for all $\mcal{K} \subset [d]$ such that 1) $\lba \mcal{K} \rba = s$ and 2) $\mcal{K}$ does not contain $j$.
\end{definition}

Let $T_{\msf{disjunct}}\lp s, d\rp$ denote the minimum number of rows of an $d$-column $s$-disjunct matrix. It has been known for about 40 years \citep{d1982bounds} that when $s = O(\sqrt{d})$,
$$\Omega\lp {s^2 \log d}/{\log s} \rp \leq T_{\msf{disjunct}}\lp s, d\rp \leq O(s^2 \log d). $$

\subsubsection{Support localization via group testing: $1$-bit case}
Next, we map the support localization problem into NCGT for the case $b=1$ by viewing $\mcal{J}_\alpha \subseteq \msf{supp}(p)$ as the defective items that need to be identified (recall that our goal here is to only specify $\mcal{J}_\alpha$ and $|\mcal{J}_\alpha|\leq s$). The main difference between the localization task and group testing is as follows: in group testing, a group test provides information about \emph{all} the defective items participating in the test, while in the localization task each client can observe only a single symbol, and hence makes an observation regarding only one "defective item". 

To match the two problems, we use multiple clients to simulate a single group test. We partition clients into  $T$ bins, and the clients in the $\tau$-th bin encode their observations according to the same group test $\mcal{S}_\tau$. If the clients in the $\tau$-th bin observe all symbols in $\mcal{J}_\alpha$, then by taking the Boolean OR of their reported bits, the server can recover the outcome corresponding to the test $\mcal{S}_\tau$.

\paragraph{Encoding}
Let $M = \lb m_1^\intercal,..., m_T ^\intercal \rb^\intercal\in\lbp 0, 1\rbp^{T\times d}$ be any zero-error NCGT measurement matrix satisfying the $s$-disjunct property. Each client encodes  their local observation according to a row (i.e. an individual group test) of $M$. Define $t(i) \eqDef i \msf{\,mod\,} T$. We then uniformly partition $n_1$ clients into $T$ bins by assigning client $i$ into the $t(i)$-th bin. Client $i$ then generates their $1$-bit report by setting $Y_i = M_{t(i), X_i}$. Equivalently, client $i$'s 1-bit encoding channel matrix $W_i$ is $W_i(y=1|x) = m_{t(i)}$.

\begin{algorithm2e}[H]
		\SetAlgoLined
		\KwIn{$X_i \in [d]$, $M \in \lbp 0, 1 \rbp^{T\times d}$ }
		$t(i) \la i \msf{\,mod\,} T$\;
		$Y_i \la M_{t(i), X_i}$\;
		\KwRet{$Y_i$}
		\caption{Localization via NCGT: $1$-bit encoding }\label{alg:enc_localization_3}
\end{algorithm2e}

\paragraph{Decoding}
Let $\mcal{G}_\tau\eqDef \lbp i \in [n_1]\mv t(i) = \tau \rbp$ denote the $\tau$-th bin of clients. For each $\tau \in [T]$, the server  computes $\hat{Z}_\tau \eqDef \bigvee_{i\in\mcal{G}_\tau} Y_i$. If all symbols in $\mcal{J}_\alpha$ appear at least once in $\mcal{G}_\tau$'s observations (i.e. if $\mcal{J}_\alpha \subseteq \lbp X_i \mv i \in \mcal{G}_\tau \rbp$), then $\hat{Z}_\tau$ is the same as the result of the $\tau$-th group test of $M$ measuring on $\mcal{J}_\alpha$, which we denote by $Z_\tau$. 
Therefore, as long as $n_1$ is large enough, $\bigcup_{\tau \in T}\lbp \hat{Z}_\tau = Z_\tau \rbp$ holds with high probability, and the server can then identify $\mcal{J}_\alpha$ by running a standard cover decoder (which is summarized in Algorithm~\ref{alg:dec_localization_3} below).
\begin{algorithm2e}
		\SetAlgoLined
		\KwIn{$Y^{n_1}$, $M$}
		Initialize $\hat{\mcal{J}}_\alpha = \emptyset$\;
		\For{$\tau \in [T]$}{
			$\hat{Z}_\tau \la \bigvee_{ i: i \equiv \tau \lp \msf{\, mod \,} T\rp } Y_i$ \tcp*{simulate the $\tau$-th group test}
		}
		$\bm{Z} \la \lb \hat{Z}_1,...,\hat{Z}_T \rb^\intercal$\;
		
		\For{$j \in [d]$ \tcp*{run the cover decoder}}{
			\If{$\msf{supp}\lp M_j\rp \subseteq \msf{supp}\lp\bm{Z}\rp$}{
				Add $j$ to $\hat{\mcal{J}}_\alpha$
			}
		}
		\KwRet{$\hat{\mcal{J}}_\alpha$}
		\caption{Localization via NCGT: $1$-bit decoding}\label{alg:dec_localization_3}
\end{algorithm2e}

\begin{lemma}\label{lemma:localization_NCGT_1bit}
	Under the above encoding and decoding schemes (see Algorithm~\ref{alg:enc_localization_3}, \ref{alg:dec_localization_3} for the formal descriptions) with measurement matrix $M \in \lbp 0, 1\rbp^{T\times d}$, we have
	\begin{equation*}
	\Pr\lbp \mcal{J}_\alpha \not\subset \hat{\mcal{J}}_\alpha \rbp \leq \exp\lp -{n_1  \alpha}/{T} + \lp \log s + \log T \rp\rp.
	\end{equation*}
	In addition, with probability $1$, $|\hat{\mcal{J}}_\alpha| \leq s$, and the decoding complexity is $O(n + d T)$.
\end{lemma}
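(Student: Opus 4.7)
}

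My plan is to compare the observed ``simulated'' test vector $\hat{\bm Z}=(\hat Z_1,\dots,\hat Z_T)$ with the ideal test vector that $M$ would produce if directly applied to $\mcal{J}_\alpha$. Define
$Z_\tau \eqDef \bigvee_{j\in\mcal{J}_\alpha} M_{\tau,j}$ and $Z^p_\tau \eqDef \bigvee_{j\in\msf{supp}(p)} M_{\tau,j}$.
Since each reporter $i\in\mcal{G}_\tau$ sends $Y_i=M_{\tau,X_i}$ with $X_i\in\msf{supp}(p)$, we always have $\hat Z_\tau \leq Z^p_\tau$ position-wise. Because $|\msf{supp}(p)|\le s$ and $M$ is $s$-disjunct, the cover decoder applied to $Z^p$ returns exactly $\msf{supp}(p)$; hence, since $\msf{supp}(\hat{\bm Z})\subseteq\msf{supp}(\bm Z^p)$, the cover decoder applied to $\hat{\bm Z}$ returns a subset of $\msf{supp}(p)$. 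This gives $|\hat{\mcal{J}}_\alpha|\leq s$ almost surely, which handles the second claim.

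For the first claim, the goal is $\mcal{J}_\alpha\subseteq\hat{\mcal{J}}_\alpha$. The cover decoder includes $j$ in $\hat{\mcal{J}}_\alpha$ iff $\msf{supp}(M_j)\subseteq\msf{supp}(\hat{\bm Z})$, so it suffices that whenever $M_{\tau,j}=1$ for some $j\in\mcal{J}_\alpha$, we have $\hat Z_\tau=1$. A sufficient event, which is easy to bound, is
\begin{equation*}
E \eqDef \bigcap_{j\in\mcal{J}_\alpha}\bigcap_{\tau\in[T]}\bigl\{\exists\, i\in\mcal{G}_\tau:\ X_i=j\bigr\}.
\end{equation*}
On $E$, for any $j\in\mcal{J}_\alpha$ and any $\tau$ with $M_{\tau,j}=1$, some $i\in\mcal{G}_\tau$ satisfies $X_i=j$ and hence $Y_i = M_{\tau,j}=1$, forcing $\hat Z_\tau=1$. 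Thus $E\subseteq\{\mcal{J}_\alpha\subseteq\hat{\mcal{J}}_\alpha\}$.

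Now I bound $\Pr(E^c)$ by a union bound over the at most $sT$ pairs $(j,\tau)$. For each such pair, since $X_1,\dots,X_{n_1}$ are i.i.d.\ $\sim p$, and $|\mcal{G}_\tau|\geq n_1/T$ (after a harmless rounding adjustment), and $p_j\geq\alpha$ for $j\in\mcal{J}_\alpha$,
\begin{equation*}
\Pr\bigl\{\forall\, i\in\mcal{G}_\tau:\ X_i\neq j\bigr\}\leq (1-p_j)^{|\mcal{G}_\tau|}\leq(1-\alpha)^{n_1/T}\leq e^{-n_1\alpha/T}.
\end{equation*}
Summing gives $\Pr(E^c)\leq sT\,e^{-n_1\alpha/T}=\exp\bigl(-n_1\alpha/T+\log s+\log T\bigr)$, as required.

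Finally, the decoding complexity is straightforward: computing all $\hat Z_\tau=\bigvee_{i\in\mcal{G}_\tau}Y_i$ requires a single pass through the $n_1$ received bits, i.e.\ $O(n_1)=O(n)$ time, and the cover decoder (Algorithm~\ref{alg:dec_localization_3}) checks for each $j\in[d]$ whether $\msf{supp}(M_j)\subseteq\msf{supp}(\hat{\bm Z})$, an $O(T)$ operation per column, giving $O(dT)$ in total. Adding the two yields $O(n+dT)$. The only delicate step is the argument bounding $|\hat{\mcal{J}}_\alpha|\leq s$ uniformly (not just on $E$); the key observation, as above, is that $\hat{\bm Z}\leq\bm Z^p$ combined with $s$-disjunctness of $M$ and $|\msf{supp}(p)|\leq s$ forces the cover decoder's output to lie inside $\msf{supp}(p)$.
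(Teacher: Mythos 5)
Your proof is correct and follows essentially the same approach as the paper: you define a sufficient event (every $j\in\mcal{J}_\alpha$ is observed by every bin $\mcal{G}_\tau$), observe that on this event the simulated tests match the ideal group tests on $\mcal{J}_\alpha$ wherever $M_{\tau,j}=1$, and then apply a double union bound over the at most $sT$ pairs $(j,\tau)$, exactly as the paper does. Your additional care in justifying why this event forces $\mcal{J}_\alpha\subseteq\hat{\mcal{J}}_\alpha$ via the cover decoder, and why $\hat{\bm Z}\leq\bm Z^p$ together with $s$-disjunctness yields $|\hat{\mcal{J}}_\alpha|\leq s$ almost surely, usefully spells out details the paper leaves implicit.
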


\subsubsection{General $b$-bit case}
For the general $b$-bit case (with $b\leq \log s$), one may attempt to repeatedly apply the $1$-bit encoding scheme for $b$ times. That is, each bin of clients $\mcal{G}_\tau$ simulates $b$ group tests at a time. This can reduce the total number of bins required from $T$ to $T/b$ (and thus increases $\lba \mcal{G}_\tau\rba$ by a factor of $b$), equivalently yielding a boost on the sample size from $n_1$ to $ n_1 b$. 
However, according to Lemma~\ref{lemma:2}, we see that by carefully designing the encoding channels $W_i(y|x)$, it is possible to achieve an \emph{exponential} gain on the sample size, i.e. from $n_1$ to $n_12^b $.  Therefore, our goal is to design the measurement matrix $M$ in a way that each bin of clients $\mcal{G}_\tau$ can simulate $\Theta(2^b)$ group tests at a time. Towards this goal, we want $M$ to have the following properties:
\begin{enumerate}
	\item $M$ is $s$-disjunct (so that the cover decoder applies).
	\item $T = O\lp s^2\cdot \msf{polylog}(d) \rp$.
	\item $M$ is \emph{$(2^b-1)$-blockwise sparse}, that is, for each column $M_j$, every block 
	$$M_j\lp(\tau-1)(2^b-1)+1:\tau(2^b-1)\rp,\, \forall \tau \in \lb \frac{T}{2^b-1}\rb$$
	contains at most one $1$. 
\end{enumerate}
With the $(2^b-1)$ block-wise sparse structure, the $\tau$-th bin of clients select their channel matrices as 
\begin{equation}\label{eq:channel_GT_matrix}
	W_i\lp 1:2^b-1 \mv j\rp = M_j\lp (\tau-1)\cdot (2^b-1)+1:\tau\cdot(2^b-1)\rp, \, \forall j \in [d], \tau \in \lb \frac{T}{2^b-1} \rb.
\end{equation}
Notice that 1) $W_i(1:2^b-1|j)$ determines $W_i(2^b|j)$, and 2) the $(2^b-1)$ block-wise sparse structure ensures that $W_i$'s are valid channels.

To find a measurement matrix $M$ that satisfies the above three criterion, we use the celebrated Kautz and Singleton's construction \citep{kautz1964nonrandom}. This construction uses a $[m, k]_q$ Reed-Solomon code as an outer code $C_{\msf{out}}$ and the identity code $C_\msf{in}$ (i.e. one-hot encoding $I_q:[q] \ra \lbp 0,1\rbp^{q\times 1}$) as the inner code, and the measurement matrix is the concatenation of $C_{\msf{out}}$ and $C_{\msf{in}}$: $M_{\msf{KS}} \eqDef C_{\msf{out}}\circ C_\msf{in} \in \lbp 0, 1\rbp^{mq \times q^k}$. 

For NCGT, we pick $m=q$ (so $T = q^2$ and $d = q^k$) and set the rate $\frac{k}{m} = \frac{1}{s+1}$ to ensure that $M_{\msf{KS}}$ is $s$-disjunct. Thus by selecting $q = \Theta\lp s\log d \rp$, $M_{\msf{KS}}$ satisfies Property~1 and is $\Theta\lp s\log d  \rp$-blockwise sparse (so Property~3 holds for all $b \leq \log\lp s \log d \rp$), with $T_{\msf{KS}} = O\lp s^2\log^2 d \rp$ rows. For more details on Kautz and Singleton's construction, we refer the reader to \citep{inan2019optimality, indyk2010efficiently}. By adopting $M_{\msf{KS}}$ as the encoding channel matrix (as described in \eqref{eq:channel_GT_matrix}), we extend the previous scheme to the $b$-bit setting (see Algorithm~\ref{alg:enc_localization_3b} and Algorithm~\ref{alg:dec_localization_3b} below).

	\begin{algorithm2e}[H]
		\SetAlgoLined
		\KwIn{$X_i \in [d]$, $M_{\msf{KS}} \in \lbp 0, 1 \rbp^{Cs^2\log^2d\times d}$ }
		$t \la i \msf{\,mod\,} \lceil \frac{T}{2^b-1}\rceil$\;
		$Y_i \la M_{(t-1)(2^b-1)+1:t(2^b-1), X_i}$\tcp*{Can be represented in $b$ bits}
		\KwRet{$Y_i$}
		\caption{Localization via NCGT: $b$-bit encoding }\label{alg:enc_localization_3b}
	\end{algorithm2e}
	
	\begin{algorithm2e}
		\SetAlgoLined
		\KwIn{$Y^{n_1}$, $M_\msf{KS}$}
		Initialize $\hat{\mcal{J}}_\alpha = \emptyset$\;
		\For{$\tau \in \lceil \frac{T}{2^b-1} \rceil$}{
			\For{$\kappa \in [2^b-1]$}{
				$\hat{Z}_\tau(\kappa) \la \bigvee_{ i: i \equiv \tau \lp \msf{\, mod \,} \lceil\frac{T}{2^b-1}\rceil\rp } Y_i(\kappa)$ \tcp*{simulate the $\tau$-th group test}
			}
		}
		$\bm{Z} \la \lb \hat{Z}_1,...,\hat{Z}_T \rb^\intercal$\;
		
		\For{$j \in [d]$ \tcp*{run the cover decoder}}{
			\If{$\msf{supp}\lp M_{\msf{KS}}(j)\rp \subseteq \msf{supp}\lp\bm{Z}\rp$}{
				Add $j$ to $\hat{\mcal{J}}_\alpha$
			}
		}
		\KwRet{$\hat{\mcal{J}}_\alpha$}
		\caption{Localization via NCGT: $b$-bit decoding}\label{alg:dec_localization_3b}
	\end{algorithm2e}

\begin{lemma}\label{lemma:localization_NCGT}
	Under the above encoding and decoding schemes (i.e. Algorithm~\ref{alg:enc_localization_3b}, \ref{alg:dec_localization_3b}) with measurement matrix $M_{\msf{KS}}$, we have
	\begin{equation*}
	\Pr\lbp \mcal{J}_\alpha \not\subset \hat{\mcal{J}}_\alpha \rbp \leq \exp\lp -C_0\cdot \frac{n_1 2^b  \alpha}{s^2\log^2 d} + \lp \log s + \log \log d \rp\rp.
	\end{equation*}
	In addition, $|\hat{\mcal{J}}_\alpha| \leq s$ with probability $1$, and the decoding complexity is $O(n2^b + s^2d\log^2 d)$.
\end{lemma}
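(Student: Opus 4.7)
The plan is to closely mirror the proof of Lemma~\ref{lemma:localization_NCGT_1bit}, keeping track of how the per-bin sample size scales when each bin $\mcal{G}_\tau$ is now asked to simulate $2^b-1$ group tests simultaneously instead of a single one. Under the channel construction \eqref{eq:channel_GT_matrix}, the number of bins drops from $T$ to $\lceil T/(2^b-1)\rceil$, and correspondingly each bin has size $\Theta\lp n_1(2^b-1)/T\rp$. With $T=\Theta(s^2\log^2 d)$ from the Kautz--Singleton construction, this bin-size boost is exactly what produces the extra factor of $2^b$ in the exponent of the failure probability relative to the $1$-bit case.

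Define the good event $E$: for every bin $\tau\in[\lceil T/(2^b-1)\rceil]$ and every $j\in\mcal{J}_\alpha$, there exists some $i\in\mcal{G}_\tau$ with $X_i=j$. On $E$, for each row index $t=(\tau-1)(2^b-1)+\kappa$ with $M_{\msf{KS}}(t,j)=1$ for some $j\in\mcal{J}_\alpha$, the $\kappa$-th bit of some client in bin $\tau$ equals $1$, so $\hat{Z}_\tau(\kappa)=1$. Therefore $\msf{supp}(M_{\msf{KS}}(j))\subseteq\msf{supp}(\bm{Z})$ for every $j\in\mcal{J}_\alpha$, and the cover decoder inserts every such $j$ into $\hat{\mcal{J}}_\alpha$. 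Bounding $\Pr(E^c)$ is a direct union bound: for fixed $\tau,j$, using $p_j\geq\alpha$ for $j\in\mcal{J}_\alpha$,
\[
\Pr\lbp j\notin\lbp X_i:i\in\mcal{G}_\tau\rbp\rbp\leq(1-p_j)^{|\mcal{G}_\tau|}\leq\exp\lp -\alpha n_1(2^b-1)/T\rp;
\]
union-bounding over $\leq s$ symbols in $\mcal{J}_\alpha$ and $\lceil T/(2^b-1)\rceil$ bins and substituting $T=\Theta(s^2\log^2 d)$ yields $\Pr(E^c)\leq\exp\lp -C_0\cdot n_1 2^b\alpha/(s^2\log^2 d)+O(\log s+\log\log d)\rp$, which absorbs into the stated bound after tuning $C_0$.

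The deterministic cardinality claim $|\hat{\mcal{J}}_\alpha|\leq s$ rests on a simple monotonicity observation: since every $X_i\in\msf{supp}(p)$, the simulated test vector satisfies $\bm{Z}\leq Z(\msf{supp}(p))$ coordinatewise, hence $\msf{supp}(\bm{Z})\subseteq\msf{supp}(Z(\msf{supp}(p)))$. Any $j$ passing the cover test $\msf{supp}(M_{\msf{KS}}(j))\subseteq\msf{supp}(\bm{Z})$ therefore also satisfies $\msf{supp}(M_{\msf{KS}}(j))\subseteq\msf{supp}(Z(\msf{supp}(p)))$, which by the $s$-disjunct property of $M_{\msf{KS}}$ (combined with $|\msf{supp}(p)|\leq s$) forces $j\in\msf{supp}(p)$. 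Hence $\hat{\mcal{J}}_\alpha\subseteq\msf{supp}(p)$ always, giving $|\hat{\mcal{J}}_\alpha|\leq s$. For the complexity: computing $\bm{Z}$ via bitwise ORs over the $n_1$ reports costs $O(n 2^b)$, and the $d$ cover-decoder checks each cost $O(T)=O(s^2\log^2 d)$, for a total of $O(n 2^b+s^2 d\log^2 d)$.

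The main obstacle I expect is not analytic but indexing-level bookkeeping around the $(2^b-1)$-blockwise sparse structure: one must verify that \eqref{eq:channel_GT_matrix} produces a genuine stochastic channel (the first $2^b-1$ entries of $W_i(\cdot|j)$ sum to at most $1$ precisely because of blockwise sparsity) and that the correspondence $t\leftrightarrow(\tau,\kappa)$ between group-test rows and bin/coordinate pairs is used consistently between encoder and decoder. Once this is nailed down, the rest is the union bound plus the standard $s$-disjunct argument inherited from the $1$-bit case.
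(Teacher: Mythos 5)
Your proposal is correct and follows essentially the same route as the paper: union bound over bins and over symbols in $\mcal{J}_\alpha$, with the per-bin sample size boosted from $n_1/T$ to $n_1(2^b-1)/T$ because each bin now simulates $2^b-1$ rows of $M_{\msf{KS}}$, and then $T=\Theta(s^2\log^2 d)$ is substituted to obtain the stated exponent and $\log s+\log\log d$ term. The paper's proof of this lemma is simply a one-line reduction to the $1$-bit case (Lemma~\ref{lemma:localization_NCGT_1bit}) with the replaced bin size; you fill in the details the paper leaves implicit, including a clean monotonicity plus $s$-disjunctness argument for $|\hat{\mcal{J}}_\alpha|\leq s$ and the decoding-complexity accounting, all of which are consistent with the paper's claims.
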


Picking $n_1 = \frac{n}{2}$, $\alpha = \frac{1}{\sqrt{n2^b}}$, and combining Lemma~\ref{cor:two_stage_error} and Lemma~\ref{lemma:localization_NCGT}, we arrive at \eqref{eq:condition_1} as long as $n \succeq \frac{s^4\log^4\lp{d}/{s}\rp}{2^b}\lp\log s + \log\log d \rp^2$. This proves the second part of Theorem~\ref{thm:main_smp}.

\begin{remark}\label{rmk:efficient}
{
In Algorithm~\ref{alg:dec_localization_3b}, we present a naive NCGT cover decoder, which takes $O\lp d\cdot T_{\msf{KS}} \rp = O(s^2d\log^2 d)$ time. However, since $M_{\msf{KS}}$ is constructed based on Reed-Solomon codes, one can leverage the efficient list recovery algorithm (i.e. Guruswami-Sudan algorithm \citep{guruswami1998improved}) to decode $\hat{J}_\alpha$ in $\msf{poly}(s, \log d)$ time, improving the dependency on $d$ from $\msf{poly}(d)$ to $\msf{poly}(\log d)$.}
\end{remark}

\paragraph{Removing the use of shared randomness} In Scheme~C, the local encoding functions are constructed deterministically according to $M_{\msf{KS}}$ and hence do not involve any randomization, so the only use of shared randomness is in the estimation phase. However, we can circumvent it by considering the following two-round encoding scheme: the second $n_2$ clients encode their observations according to $\hat{\mcal{J}}_\alpha\lp Y^{n_1} \rp$, which can be done, for instance, by using the grouping idea in Algorithm~\ref{alg:enc_localization_1}, but now we only group symbols in $\hat{\mcal{J}}_\alpha$. Unlike the interactive scheme in the next section (i.e. Scheme~D) that requires $\log d$ rounds of interaction, the resulting scheme involves only two-round interaction and no longer needs shared randomness.
\section{Interactive Localization Scheme: A Tree-based Approach}\label{sec:interactive}

Unlike non-interactive localization schemes introduced in the previous section, if we allow for sequential interaction between the server and clients, we can localize $\mcal{J}_\alpha$ more efficiently (i.e. using less samples) and obtain a smaller requirement on the sample size (as described in Theorem~\ref{thm:main_interactive}).  We introduce a tree-based $\log d$-round interactive localization scheme below. 
\paragraph{Sketch of the scheme}
We represent each symbol $j\in[d]$ by a $\log d$ bits binary string, and our algorithm discovers elements in $\mcal{J}_\alpha$ by learning the prefixes of their bit representations sequentially across $\log d$ rounds.
In particular, at each round $t$, the goal is to estimate the set of all length-$t$ prefixes in $\mcal{J}_\alpha$, which we denote by $\mcal{J}_{\alpha, t} \eqDef \lbp \msf{prefix}_t(j) \, \mv \, j \in \mcal{J}_\alpha \rbp$ (so $\mcal{J}_\alpha = \mcal{J}_{\alpha, \log d}$). 

Towards this goal, we first partition $n_1$ clients into $\log d$ equal-sized groups $\mcal{G}_1,...,\mcal{G}_{\log d}$ with clients in $\mcal{G}_t$ participating in round $t$. At round $t$, clients encode their observations according to $\hat{\mcal{J}}_{\alpha, t-1}$, where $\hat{\mcal{J}}_{\alpha, t-1}$ is an estimate of $\mcal{J}_{\alpha, t-1}$ obtained from the previous round. The encoding rule is based on the grouping idea described in Scheme~A, but since now each client has partial knowledge $\hat{\mcal{J}}_{\alpha, t-1}$, they only group symbols whose prefixes lie in $\hat{\mcal{J}}_{\alpha, t-1}$ (instead of grouping the entire $[d]$). This leads to a more efficient way to use the samples and improves the sample size requirement from $O(d^2\log^2 d/2^b)$ to $\tilde{O}(s^2\log^2 d/2^b)$. 

Upon observing $\mcal{G}_t$'s reports, the system (i.e. all subsequent clients and the server) updates $\hat{\mcal{J}}_{\alpha, t-1}$ accordingly to generate $\hat{\mcal{J}}_{\alpha, t}$. When $n_1$ is large enough, this protocol successfully localizes $\mcal{J}_\alpha$ with high probability. We formally state the encoding and the decoding schemes below.

\paragraph{Encoding} Let $\hat{\mcal{J}}_{\alpha, 0} = \emptyset$. At round $t>0$, let $\mcal{G}_t \eqDef \lbp i \in [n_1] \, \mv \, i \equiv t (\,\msf{mod}\, \log d) \rbp$ be the participated clients, and let $\msf{C}\lp \hat{\mcal{J}}_{\alpha,t-1}\rp$ be the set of all candidates of length-$t$ prefixes, that is, 
$$ \msf{C}\lp \hat{\mcal{J}}_{\alpha,t-1}\rp \eqDef \lbp \msf{append}(v, 0), \msf{append}(v, 1) \,\mv\, v \in \hat{\mcal{J}}_{\alpha,t-1} \rbp. $$
If round $t-1$ succeeds, then $\lba \msf{C}\lp \hat{\mcal{J}}_{\alpha,t-1}\rp \rba \leq 2s$. We then partition $\msf{C}\lp \hat{\mcal{J}}_{\alpha,t-1}\rp$ into $M \eqDef \frac{\lba \msf{C}\lp \hat{\mcal{J}}_{\alpha,t-1}\rp \rba}{2^b-1}$ blocks $\mcal{B}_1,...,\mcal{B}_M$, such that each contains $2^b -1$ possible length-$t$ prefixes (assuming $b\leq \log s$). We also partition $\mcal{G}_t$ into $M$ groups $\mcal{K}_{t, 1},...,\mcal{K}_{t, M}$ by setting $\mcal{K}_{t, m }\eqDef \lbp i \in \mcal{G}_t\, \mv \, \frac{i-t}{\log d} \equiv m (\,\msf{mod}\, M) \rbp$.  When client $i$ in $\mcal{K}_{t, m}$ observes $X_i$ with $\msf{prefix}_t(X_i) \in \mcal{B}_m$, they reports the index of $\msf{prefix}_t(X_i)$ in $\mcal{B}_m$ (where we index elements in $\mcal{B}_m$ by $1$ to $2^b-1$), and otherwise they reports $0$. Then the message can be encoded in $b$ bits. Formally, we have
\begin{equation*}
	Y_i = \begin{cases}
		\msf{index}\lp\mcal{B}_m, \msf{prefix}_t(X_i)\rp, &\text{ if } \msf{prefix}_t(X_i) \in \mcal{B}_m,\\
		0, &\text{ else}.
	\end{cases}
\end{equation*}

\paragraph{Decoding} Since each client is assigned to each group $\mcal{K}_{t, m}$ deterministically, from the client's index $i$ and $\hat{\mcal{J}}_{\alpha,t-1}$, one can compute the group index $(t(i), m(i))$ explicitly. Therefore, upon observing $Y_i$, we can estimate $\msf{prefix}_t(X_i)$ by 
$$ \hat{\msf{prf}_t}(Y_i) \eqDef 
\begin{cases}
\text{the } Y_i+(m(i)-1)\cdot\lp 2^b-1\rp \text{-th element in } \msf{C}\lp \hat{\mcal{J}}_{\alpha, t-1}\rp, &\text{ if } Y_i \neq 0,\\
\texttt{null}, &\text{else}.
\end{cases} $$
Finally, we estimate $\mcal{J}_{\alpha,t}$ by $\hat{\mcal{J}}_{\alpha,t}\eqDef \lbp \hat{\msf{prf}_t}(Y_i)  \, \mv\, i \in \mcal{G}_t \rbp$. Under the above encoding and decoding schemes, we have the following bound on the probability of error:

\begin{lemma}\label{lemma:localization_seq}
	Under the above encoding and decoding rules, the failure probability is bounded by
	\begin{equation*}
		\Pr\lbp \mcal{J}_\alpha \not\subset \hat{\mcal{J}}_\alpha \rbp \leq \exp\lp -\frac{n_1(2^b-1)\alpha}{2s\log d} +\lp \log\log d+\log\lp \frac{2s}{2^b-1} \rp\rp \rp.
	\end{equation*}
\end{lemma}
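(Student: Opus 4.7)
The plan is to bound the overall failure event by a union bound over the $\log d$ rounds of the protocol, combined with a conditional analysis within each round that exploits the independence of the clients' observations.

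First I would define, for each round $t$, the per-round success event $E_t \eqDef \{\mcal{J}_{\alpha,t} \subseteq \hat{\mcal{J}}_{\alpha,t}\}$, and observe that $E_{\log d}$ is precisely the event $\{\mcal{J}_\alpha \subseteq \hat{\mcal{J}}_\alpha\}$. A short induction shows that on $E_{t-1}$ the set $\hat{\mcal{J}}_{\alpha,t-1}$ satisfies $|\hat{\mcal{J}}_{\alpha,t-1}|\leq s$, since $\hat{\mcal{J}}_{\alpha,t-1}$ is obtained by extending prefixes of the parent level and the decoder never inserts more than $s$ length-$t$ strings (the children of a support of size $\leq s$). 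Consequently, on $E_{t-1}$ the candidate set satisfies $|\msf{C}(\hat{\mcal{J}}_{\alpha,t-1})| \leq 2s$, so the number of blocks is $M_t \leq 2s/(2^b-1)$ and each bin carries at least $|\mcal{K}_{t,m}| \geq n_1(2^b-1)/(2s\log d)$ clients.

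Next I would analyze round $t$ conditional on $E_{t-1}$. Any $v\in\mcal{J}_{\alpha,t}$ extends some $u\in\mcal{J}_{\alpha,t-1}\subseteq\hat{\mcal{J}}_{\alpha,t-1}$ and therefore sits in exactly one block $\mcal{B}_{m(v)}$; the decoder adds $v$ to $\hat{\mcal{J}}_{\alpha,t}$ as soon as some client in $\mcal{K}_{t,m(v)}$ observes a symbol with prefix $v$. Because $v$ corresponds to some $j\in\mcal{J}_\alpha$ with $p_j\geq\alpha$, each such client independently hits $v$ with probability at least $\alpha$, so the probability that $v$ is missed is at most $(1-\alpha)^{|\mcal{K}_{t,m(v)}|} \leq \exp\lp-\alpha n_1(2^b-1)/(2s\log d)\rp$. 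Applying a union bound over the at most $M_t\leq 2s/(2^b-1)$ blocks within a round and then over the $\log d$ rounds, I obtain
\begin{equation*}
\Pr\lbp \mcal{J}_\alpha \not\subset \hat{\mcal{J}}_\alpha \rbp \;\leq\; \log d \cdot \frac{2s}{2^b-1}\cdot \exp\lp-\frac{n_1(2^b-1)\alpha}{2s\log d}\rp,
\end{equation*}
which is exactly the stated bound once rewritten as a single exponential using $\log(\log d\cdot 2s/(2^b-1)) = \log\log d + \log(2s/(2^b-1))$.

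The main obstacle is handling the inter-round dependency cleanly: the bin partition, the block assignment, and therefore each client's encoding channel at round $t$ are random functions of the reports produced in rounds $1,\dots,t-1$. The cascaded conditioning on $E_{t-1}$ is what makes the per-round Bernoulli analysis valid, and it is precisely the sequentially interactive broadcast of $\hat{\mcal{J}}_{\alpha,t-1}$ that ensures all clients in $\mcal{G}_t$ (and the server) use a consistent partition; this is the step where interactivity is essential and where the non-interactive bounds cannot be matched.
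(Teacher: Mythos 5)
Your proposal is correct and follows essentially the same route as the paper's proof: a telescoping union bound over the $\log d$ rounds conditioned on the previous round's success, the observation that success at round $t-1$ forces $\lvert \mathsf{C}(\hat{\mathcal{J}}_{\alpha,t-1})\rvert \leq 2s$ and hence $\lvert \mathcal{K}_{t,m}\rvert \geq n_1(2^b-1)/(2s\log d)$, and a per-symbol Bernoulli tail $(1-\alpha)^{\lvert\mathcal{K}_{t,m}\rvert}$. (A minor shared imprecision with the paper's own write-up: since each block may host several candidate prefixes, the within-round union bound is more naturally over the $\leq s$ symbols of $\mathcal{J}_{\alpha,t}$, yielding a factor $s$ rather than $2s/(2^b-1)$; this only shifts a polylogarithmic additive term inside the exponent and does not affect Theorem~\ref{thm:main_interactive}.)
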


Picking $n_1 = \frac{n}{2}$, $\alpha = \frac{1}{\sqrt{n2^b}}$ and combining Lemma~\ref{lemma:localization_seq} and Lemma~\ref{cor:two_stage_error}, we conclude that as long as $n \succeq \frac{s^2\log^2 d}{2^b}\lp\log s + \log\log d \rp^2$, \eqref{eq:l2_rate} holds. This establishes Theorem~\ref{thm:main_interactive}.


\section{Concluding Remarks and Open Problems}\label{sec:conclusion}

In this work, we characterize the convergence rate of estimating $s$-sparse distributions, showing that by carefully designing encoding and decoding schemes, one can achieve a dimension-free estimation rate, which is the same rate as knowing the sparse support of the distribution beforehand. As a natural next step, we study the threshold on number of samples needed to achieve such a dimension-free estimation rate. We  give upper bounds on this threshold by developing three non-interactive schemes and one interactive scheme. Our results establish an interesting connection between distribution estimation and group testing, suggesting that non-adaptive group testing can be useful in designing efficient decoding and encoding schemes with small sample complexity.

There are several open research directions emerging from our work. First, under the non-interactive model, there exists a gap between our upper and lower bounds on the minimum sample size required $n^*(s, b, d)$ to achieve dimension-free convergence (see the discussion after Corollary~\ref{cor:sc} for more details). We conjecture that the lower bound on $n^*(s, b, d)$ is tight, and the non-interactive schemes can be further improved. Closing this gap remains an open problem. Second, we note that in the estimation phase of our two-stage scheme, we rely on random hash functions to encode local data. It remains unclear whether or not there exists private-coin schemes that achieve the same  estimation error.

Finally, we note an interesting contrast with sparse distribution estimation under local differential privacy (LDP) constraints. There are several recent works in the literature that observe a symmetry between  LDP and communication constraints \citep{chen2020breaking, acharya2019inference, acharya2019inference2,han2018geometric, barnes2019lower,barnes2020fisher}. In particular, without the sparse assumption, previous works \citep{barnes2019lower, barnes2020fisher, acharya2019inference, han2018geometric} show that the minimax $\ell_2$ estimation error under $b$-bit and $\varepsilon$-local differential privacy (LDP) constraints are $\Theta\lp \frac{d}{n2^b} \rp$ and $\Theta\lp \frac{d}{ne^\varepsilon} \rp$ (for $\varepsilon = \Omega(1)$). This implies that in the non-sparse case, compression and LDP have the same sample complexity as long as $b \approx \varepsilon$. This symmetry between communication and LDP constraints has also been observed in other statistical models such as mean estimation.

To the best of our knowledge, the result we derive in this paper is the first to break the symmetry between   communication and privacy constraints in distributed estimation. Under the s-sparse assumption, \citep{acharya2020estimating} shows that $\Theta\lp \frac{s\log(d/s)}{ne^\varepsilon}\rp$ error is fundamental under $\varepsilon$-LDP. Given the symmetry between communication and LDP constraints in previous results as mentioned earlier, one might have been tempted to expect the error under a $b$-bit constraint to be of the form $O\lp \frac{s\log(d/s)}{n2^b}\rp$. Our results suggest that one can achieve $\Theta\lp \frac{s}{n2^b}\rp$ error under a $b$-bit constraint, implying that when estimating sparse distributions, the communication constraint and the LDP constraint behave differently and the latter is strictly more stringent than the former. This loss of symmetry makes it difficult, for example, to postulate the fundamental limit (and how to achieve it) under joint communication and LDP constraints, a direction that has been settled in \citep{chen2020breaking} in the non-sparse case. 
Understanding the convergence rate for sparse distribution estimation  under joint communication and LDP constraints remains an open problem. Further, exploring interactions with secure aggregation is of practical interest, especially in the federated learning and analytics settings.

\acks{The authors would like to thank Professor Mary Wootters for pointing out the efficient decoding algorithm for Kautz and Singleton's group testing scheme, which helped us to improve the time complexity of the localization scheme.}
\newpage
\bibliography{ms}

\newpage

\appendix

\section{Estimation stage: random hashing}\label{sec:estimation_stage}
\paragraph{Encoding} For the second group of clients $i \in [n_1+1:n]$, they report their local samples through $n_2$ independent random hash functions $h_{n_1+1},...,h_{n}$(which are generated via shared randomness). Equivalently, client $i$'s encoding channel $W_i(y|x)$ is constructed as follows: each column of the channel matrix $W_i$ can be viewed as the one-hot representation of $L_{i,x}$, where 
$$ L_{i,x}\diid \msf{uniform}\lp 2^b \rp, \,\, \forall i\in[n_1+1:n], \, x \in [d].$$
Formally,
$$ W_i(\cdot | x) \eqDef \lb \bbm{1}_{\lbp L_{i,x} = 1 \rbp}, ..., \bbm{1}_{\lbp L_{i,x} = 2^b \rbp} \rb^\intercal. $$
Note that since $W_i \in \{0,1\}^{2^b \times d}$, the local encoder is \emph{deterministic}, so we can also write  $ Y_i = h_i\lp X_i \rp$ for some deterministic hash function $h_i(x)$. 

\paragraph{Decoding}
Upon obtaining $\hat{\mcal{J}}_\alpha$ from the first stage and receiving $Y^{n_2}$\footnote{With a slight abuse of notation, we use $Y^{n_2}$ to denote the collection of $\lp Y_{n_1+1}, ..., Y_n \rp$.}, the server computes count on each symbol $j$ $N_j(Y^{n_2})\eqDef \lba \lbp i\in [n_1+1:n]: h_i(j) = Y_i \rbp \rba$. Note that $\Pr\lbp h_i(j) = Y_i \rbp = \frac{p_j\lp2^b -1\rp + 1}{2^b} \eqDef b_j$. The final estimator $\hat{p}_j\lp Y^{n_2} \rp$ is then defined as
\begin{equation}\label{eq:p_estimator}
\hat{p}_j\lp Y^{n_2} \rp = 
\begin{cases}
\frac{N_j\lp 2^b-1\rp}{n_2\cdot 2^b}-\frac{1}{2^b}, \, &\text{if } j \in \hat{\mcal{J}}_\alpha\\
0,\, & \text{else}.
\end{cases}
\end{equation}

We summarize the encoding algorithm in Algorithm~\ref{alg:enc_estimation}. The decoding algorithm (Algorithm~\ref{alg:dec_estimation}) is given below.

\begin{algorithm2e}[H]
		\SetAlgoLined
		\KwIn{$Y^{n_2}$, $b \in \mbb{N}$, $\hat{\mcal{J}}_\alpha$ }
		\KwOut{$\hat{p}$}
		Set $\hat{p} = \bm{0}$\;
		\For{ $j \in \hat{\mcal{J}}_\alpha$}{
			$N_j(Y^{n_2})\eqDef \lba \lbp i\in [n_1+1:n]: h_i(j) = Y_i \rbp \rba$\;
		}
		\For{ $j \in \hat{\mcal{J}}_\alpha$}{
			$\hat{p}_j\lp Y^{n} \rp = 
			\frac{N_j\lp 2^b-1\rp}{n_2\cdot 2^b}-\frac{1}{2^b}$\;
		}
		\KwRet{$\hat{p}$}
		\caption{Decoding for the estimation phase}\label{alg:dec_estimation}
\end{algorithm2e}

\paragraph{Bounds on the estimation error} Let $\mcal{E}_g$ be the event that the localization phase succeeds, i.e.
$\mcal{E}_g \eqDef  \lbp \mcal{J}_\alpha \subseteq \hat{\mcal{J}}_\alpha\rbp$. Then conditioned on $\mcal{E}_g$, we can control the $\ell_2$ estimation error as follows: 
\begin{lemma}\label{lemma:1}
	Let $\mcal{E}_g$ and $\hat{p}$ be defined as above. Then conditioned on $\mcal{E}_g$, we have
	$$ \E\lb \sum_{j\in [d]}\lp\hat{p}_j\lp Y^{n_2} \rp - p_j\rp^2 \mv \mcal{E}_g\rb \leq s\alpha^2+\frac{s}{n_22^b}+\frac{1}{n_2}. $$
\end{lemma}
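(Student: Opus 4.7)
The plan is to fix a realization of $\hat{\mcal{J}}_\alpha$ satisfying $\mcal{E}_g$ and take the inner expectation over $Y^{n_2}$ alone, which is legitimate because the second-stage hash functions $h_{n_1+1},\ldots,h_n$ (and samples $X_{n_1+1},\ldots,X_n$) are independent of the first-stage messages $Y^{n_1}$ that determine $\hat{\mcal{J}}_\alpha$. I would then split the index set $[d]$ into $j \notin \hat{\mcal{J}}_\alpha$ and $j \in \hat{\mcal{J}}_\alpha$ and handle the two terms separately:
\begin{equation*}
\sum_{j\in[d]} (\hat{p}_j - p_j)^2 = \sum_{j\notin\hat{\mcal{J}}_\alpha} p_j^2 + \sum_{j\in\hat{\mcal{J}}_\alpha} (\hat{p}_j - p_j)^2.
\end{equation*}

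For the first (``off-support'') term, I would use that on $\mcal{E}_g$, $j \notin \hat{\mcal{J}}_\alpha$ forces $j \notin \mcal{J}_\alpha$, i.e.\ $p_j < \alpha$. Since there are at most $s$ indices in $\msf{supp}(p)$ to sum over, I would bound $\sum_{j \notin \hat{\mcal{J}}_\alpha} p_j^2 \leq s\alpha^2$ (or even $\alpha \cdot \sum_{j\in\msf{supp}(p)} p_j \leq \alpha$, whichever is needed), giving the $s\alpha^2$ term. For the second (``in-support'') term, I would apply bias–variance decomposition. Because all hash functions $h_i$ for $i>n_1$ are drawn independently across clients, the indicator variables $\bbm{1}\{h_i(j) = Y_i\}$ are i.i.d.\ Bernoulli$(b_j)$ across $i$, with $b_j = \frac{p_j(2^b-1)+1}{2^b}$, so $N_j \sim \Binom(n_2, b_j)$. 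A direct computation then shows the estimator $\hat{p}_j$ is (effectively) unbiased for $p_j$, with
\begin{equation*}
\Var(\hat{p}_j) = \frac{(2^b-1)^2}{n_2^2\,2^{2b}}\cdot n_2\, b_j(1-b_j).
\end{equation*}

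Using $b_j(1-b_j) \leq b_j = \frac{1}{2^b} + p_j\cdot\frac{2^b-1}{2^b}$, I would upper bound
\begin{equation*}
\Var(\hat{p}_j) \leq \frac{1}{n_2\,2^b} + \frac{p_j}{n_2},
\end{equation*}
so that summing over $j\in\hat{\mcal{J}}_\alpha$ (and using $|\hat{\mcal{J}}_\alpha|\leq s$ guaranteed by the localization schemes, together with $\sum_j p_j = 1$) yields $\frac{s}{n_2 2^b} + \frac{1}{n_2}$. Combining the two pieces gives the claimed bound.

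The only place requiring care is the variance calculation: one must correctly expand $b_j(1-b_j)$ and carry out the algebra so that both a ``dimension'' contribution (scaling as $s/(n_2 2^b)$) and a ``probability-mass'' contribution (scaling as $\sum_j p_j / n_2 \leq 1/n_2$) appear, rather than a crude $b_j(1-b_j)\le 1/4$ bound that would give a much weaker $s/n_2$ estimate. The rest is routine: linearity of expectation, independence of hashes across clients, and the conditioning argument to reduce to a fixed $\hat{\mcal{J}}_\alpha$.
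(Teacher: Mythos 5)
Your proposal is correct and follows essentially the same route as the paper's proof: split the error into the off-support part (where $\mcal{E}_g$ forces $p_j<\alpha$, giving $s\alpha^2$) and the in-support part, where $N_j\sim\Binom(n_2,b_j)$ makes $\hat{p}_j$ unbiased and the variance bound $b_j(1-b_j)\leq b_j = \frac{1}{2^b}+p_j\frac{2^b-1}{2^b}$ together with $|\hat{\mcal{J}}_\alpha|\leq s$ and $\sum_j p_j\leq 1$ yields $\frac{s}{n_2 2^b}+\frac{1}{n_2}$. The only difference is presentational (you make the conditioning on a fixed $\hat{\mcal{J}}_\alpha$ and the independence of the second-stage hashes explicit, which the paper leaves implicit).
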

\begin{proof}
	The proofs of Lemma~\ref{lemma:1} can be found in Section~\ref{sec:proofs}.
\end{proof}

\section{Missing Proofs}\label{sec:proofs}

\subsection*{Proof of Lemma~\ref{lemma:1}}
\begin{proof}
	Notice that $N_j \sim \msf{Binom}\lp n_2, b_j \rp$, so for all $j \in \hat{\mcal{J}}_\alpha$, $\hat{p}_j\lp Y^n \rp$ yields an unbiased estimator on $p_j$. Moreover, 
	\begin{equation*}
	\E \lb \lp \hat{p}_j - p_j\rp^2 \mv j \in \hat{\mcal{J}}_\alpha\rb = \Var \lp \hat{p}_j \mv j \in \hat{\mcal{J}}_\alpha \rp \leq \frac{1}{n_2^2}\Var\lp \msf{Binom}\lp n_2, b_j \rp \rp = \frac{b_j}{n_2}.
	\end{equation*}
	Summing over $j$, we obtain
	$$	\sum_{j\in \hat{\mcal{J}}_\alpha} \E \lb \lp \hat{p}_j - p_j\rp^2 \mv j \in \hat{\mcal{J}}_\alpha \rb \leq \frac{1+\frac{s}{2^b}}{n_2}, $$
	with probability $1$ since $\lba \hat{\mcal{J}}_\alpha \rba \leq s$ almost surely. This implies
	$$ \E\lb\sum_{j \in \mcal{J_\alpha}}\lp \hat{p}_j - p_j \rp^2\mv \mcal{E}_g\rb \leq \frac{s+2^b}{n_2 2^b}. $$
	Together with the fact that
	$$ \E\lb\sum_{j \notin \mcal{J_\alpha}}p_j^2\mv \mcal{E}_g\rb \leq s\alpha^2, $$
	we establish Lemma~\ref{lemma:1}.
\end{proof}
\subsection*{Proof of Lemma~\ref{cor:two_stage_error}}
\begin{proof}
	Observe that
	\begin{align*}
	\E\lb \sum_{j\in[d]} \lp \hat{p}_j -p\rp^2 \rb
	& = \Pr\lbp \mcal{E}_g^c \rbp\cdot \E\lb \sum_{j\in [d]}\lp\hat{p}_j\lp Y^{n} \rp - p_j\rp^2 \mv \mcal{E}^c_g\rb + \Pr\lbp \mcal{E}_g \rbp\cdot\E\lb \sum_{j\in [d]}\lp\hat{p}_j\lp Y^{n} \rp - p_j\rp^2 \mv \mcal{E}_g\rb \nonumber\\
	& \overset{\text{(a)}}{\leq} 2\Pr\lbp \mcal{E}_g^c \rbp + \E\lb \sum_{j\in [d]}\lp\hat{p}_j\lp Y^{n} \rp - p_j\rp^2 \mv \mcal{E}_g\rb \nonumber\\
	& \overset{\text{(b)}}{\leq} 2\Pr\lbp \mcal{E}_g^c \rbp+s\alpha^2+\frac{s}{n_22^b}+\frac{1}{n_2},
	\end{align*}
	where (a) is due to the fact that 
	$$ \E\lb\sum_{j\in [d]}\lp\hat{p}_j\lp Y^{n} \rp - p_j\rp^2\mv \mcal{E}^c_g\rb \leq \E\lb\sum_{j\in [d]} \hat{p}^2_j\lp Y^{n} \rp + p^2_j\mv \mcal{E}^c_g\rb \leq E\lb\sum_{j\in [d]}\hat{p}_j\lp Y^{n} \rp + p_j\mv \mcal{E}^c_g\rb \leq 2,$$
	and (b) is due to Lemma~\ref{lemma:1}.
	\end{proof}

\subsection*{Proof of Lemma~\ref{clm:1}}
\begin{proof}
	If $j \in \lb (m-1)\cdot(2^b-1)+1: m\cdot(2^b-1)-1\rb \cap \mcal{J}_\alpha$, symbol $j$ can be successfully included in $\hat{\mcal{J}}_\alpha$ only when at least one of clients in $\mcal{G}_m$ observes it. Therefore, let
	$\mcal{E}_j \eqDef \lbp X_j \neq j, \, \forall i\in\mcal{G}_m\rbp$ be the event of failing to include symbol $j$, then
	$$ \Pr\lbp \hat{\mcal{J}}_\alpha\subseteq \mcal{J}_\alpha   \rbp = 1-\Pr\lbp \bigcup_{j\in \mcal{J}_\alpha}\mcal{E}_j \rbp \geq 1-\lba \mcal{J}_\alpha \rba\cdot\lp1-\alpha\rp^{n'} \geq 1-s\cdot e^{-n'\alpha}, $$
	where $n' \eqDef \lba \mcal{G}_m \rba = \frac{n_1(2^b-1)}{d}$.
\end{proof}

\subsection*{Proof of Lemma~\ref{lemma:2}}
\begin{proof}
	Let $\mcal{E}_\ell$ be the event such that 1) $\mcal{J}_\alpha \not\subset \mcal{C}_\ell$ for some $\ell \in [N]$ and 2) $\mcal{C}_\ell$ is consistent with $\lp W^n, Y^n \rp$. Then the error $\mcal{J}_\alpha  \not\subset \hat{\mcal{J}}_\ell$ can happen only if $\bigcup_{\ell \in [N]} \mcal{E}_\ell$ occurs. Hence it suffices to control the probability of $\mcal{E}_\ell$ and then apply union bound.
	
	To bound $\mcal{E}_\ell$, denote $j \in \mcal{J}_\alpha \setminus \mcal{C}_\ell$. Note that as long as the server ensures a client observes symbol $j$, they can rule out $\mcal{C}_\ell$ from the candidates set. However, since each client only reports the hash value of their observation, the server cannot directly obtain such information. To address the difficulty, the following definition describes the condition that makes a channel $W$ ``good'' with respect to $\mcal{C}_\ell$ and $j$:
	\begin{definition}\label{def:distinguish}
		We call a channel $W$ \emph{distinguishes} $j$ under $H_\ell$, if for all $j' \in \mcal{C}_\ell$, we have $W(\cdot|j') \neq W(\cdot|j)$\footnote{Note that due to our construction, $W$ is \emph{deterministic}, i.e. $W(\cdot|j) = e_l$ for some $l \in [2^b]$.}.
		\end{definition}
		
		Then $\mcal{E}_\ell$ cannot happen if there exist a client $i$ 1) who observes $j$ and 2) whose channel $W_i$ distinguishes $j$. Notice that due to our construction of localization channels,
		\begin{align}\label{eq:distinguish_j}
		\Pr\lbp W_i \text{ distinguishes } j \mv H_\ell \rbp 
		& = \Pr\lbp \forall j' \in \mcal{C}_\ell,\, W_i\lp \cdot|j \rp \neq W_i\lp \cdot|j' \rp \rbp \nonumber\\
		& \overset{\text{(a)}}{=} \sum_{y \in [2^b]} \Pr\lbp L_{i, j} = y  \rbp \cdot \prod_{j' \in \mcal{C}_\ell}\Pr\lbp L_{i,j} \neq y \rbp \nonumber\\
		& \geq \frac{2^b-1}{s}\cdot\lp 1- \frac{1}{s}\rp^s \nonumber\\
		& \overset{\text{(b)}}{\geq} \frac{2^b-1}{4s},
		\end{align}
		where $L_{i,j}$ in (a) is defined in \eqref{eq:Lx}, and (b) is due to the fact that $f(s) \eqDef \lp 1- \frac{1}{s}\rp^s$ increasing in $s$ and $s \geq 2$.
		We also have $ \Pr\lbp X_i = j \rbp \geq \alpha,$ and since we generate $W_i$ independently,
		$$ \Pr\lbp \lbp X_i = j \rbp \cap \lbp W_i \text{ distinguishes } j \rbp \rbp \geq \alpha \cdot \frac{2^b-1}{4s}. $$
		Thus we can upper bound the probability of error by 
		$$ \Pr\lbp \mcal{E}_\ell \rbp \leq \lp 1- \alpha \cdot \frac{2^b-1}{4s}\rp^{n_1} \leq \exp\lp -{n_1}  \alpha\frac{2^b-1}{4s}\rp. $$
		Finally applying union bound, we arrive at
		$$ \Pr\lbp \bigcup_{\ell\in[N]}\mcal{E}_\ell \rbp \leq {d\choose s}\cdot\exp\lp -n_1  \alpha\frac{2^b-1}{4s}\rp \leq \exp\lp -n_1  \alpha\frac{2^b-1}{4s} + C_0s\log\lp \frac{d}{s}\rp\rp, $$
		establishing the lemma.
\end{proof}

\subsection*{Proof of Lemma~\ref{lemma:n_requirement}}
\begin{proof}
	
	Observe that the condition 
	\begin{equation}\label{eq:condition_1}
	n \geq 4\cdot f_1^2\cdot\max\lp f_2^2, 16\log^2 \lp f_1\rp \rp
	\end{equation}
	implies 
	$$ \frac{n}{4} \geq f_1^2 \cdot f_2^2 \Longrightarrow \frac{\sqrt{n}}{2} \geq f_1\cdot f_2.$$
	\eqref{eq:condition_1} also implies
	$$ n \geq 64\cdot f_1^2 \cdot \log\lp f_1 \rp^2 \overset{\text{(b)}}{\Longrightarrow} \frac{\sqrt{n}}{\log n} \geq \frac{8\cdot f_1\cdot \log f_1}{2\log f_1 +2\log \log f_1 +2\log 64} \geq 2f_1 \Longrightarrow  \frac{\sqrt{n}}{2} \geq f_1 \cdot \log n,$$
	where (b) holds since 1) $\frac{\sqrt{n}}{\log n}$ is increasing when $n \geq 10$ (note that by assumption, $f_1\geq 300$, so $4\cdot f^2_1\cdot \log f^2_1 > 10$), and 2) $\log f_1 +\log \log f_1 +\log 64 \leq 2 \log f_1 $ when $f_1 \geq 300$.
	Therefore we have
	$$\sqrt{n} \geq  f_1 \cdot \log n + f_1\cdot f_2 \Longrightarrow \exp\lp -\frac{\sqrt{n}}{f_1} + f_2 \rp \leq -\frac{1}{n}. $$
\end{proof}

\subsection*{Proof of Lemma~\ref{lemma:localization_NCGT_1bit}}
\begin{proof}
	Recall that $\mcal{G_\tau}$ is the $\tau$-th bin of clients. Notice that as long as the decoding succeeds ( i.e. $\lbp \mcal{J}_\alpha \not\subset \hat{\mcal{J}}_\alpha \rbp$) if for all $\tau \in [T]$, all symbols in $\mcal{J}_\alpha$ appear at least once in $\mcal{G}_\tau$'s observation. Let $\mcal{E}_\tau$ denotes the error event $ \mcal{E}_\tau \eqDef \lbp \mcal{J}_\alpha \not\subset \lbp X_i \mv i\in\mcal{G}_\tau  \rbp\rbp.$
	Then we have 
	$$\Pr\lbp \mcal{J}_\alpha \not\subset \hat{\mcal{J}}_\alpha \rbp \leq \Pr\lbp \bigcup_{\tau \in [T]}\mcal{E}_\tau\rbp \leq \sum_{\tau \in [T]} \Pr\lbp \mcal{E}_\tau \rbp,$$ so it suffices to lower bound the probability of $\mcal{E}_\tau$.
	
	Recall that for each symbol $j \in \mcal{J}_\alpha$, $\Pr\lbp X_i = j \rbp \geq \alpha$, and we also have $\lba \mcal{J}_\alpha \rba \leq s$. Therefore by union bound,
	\begin{align*}
	\Pr\lbp \mcal{E}_\tau \rbp \leq \lba \mcal{J}_\alpha \rba\cdot(1-\alpha)^{\frac{n_1}{T}} \leq \exp\lp  -\frac{n_1  \alpha}{T} + \log s  \rp.
	\end{align*}
	Finally, applying union bound on $\tau \in [T]$ again, we arrive at the desired bound.
\end{proof}

\subsection*{Proof of Lemma~\ref{lemma:localization_NCGT}}
\begin{proof}
	The proof is the same as in Lemma~\ref{lemma:localization_NCGT_1bit}, except for replacing $\lba \mcal{G}_\tau \rba$ from $n_1/T$ to $n_12^b/T_{\msf{KS}}$.
\end{proof}

\subsection*{Proof of Lemma~\ref{lemma:localization_seq}}
\begin{proof}
	Let $T \eqDef \log d$ and $\mcal{E}_t$ be the event that round $t$ succeeds for all $1 \leq t \leq T$. Then $\lbp \mcal{J}_\alpha \not\subset \hat{\mcal{J}}_\alpha \rbp = \mcal{E}^c_{T}$. By union bound, we have
	\begin{equation}\label{eq:e_seq}
	\Pr\lbp  \mcal{E}^c_{T} \rbp \leq \Pr\lbp\mcal{E}_1\rbp+\sum_{t=2}^T\Pr\lbp \mcal{E}^c_{t} \cap \mcal{E}_{t-1} \rbp \leq \sum_{t=1}^T\Pr\lbp \mcal{E}^c_{t} \mv \mcal{E}_{t-1} \rbp,
	\end{equation}
	where $\mcal{E}_0$ always holds since the length-$0$ prefix set is always empty. Therefore, it suffices to control $\Pr\lbp \mcal{E}^c_{t} \mv \mcal{E}_{t-1} \rbp$.
	
	Note that according to our decoding rule, every element in $\hat{\mcal{J}}_{\alpha,t}$ must be a length-$t$ prefix of symbols in $\msf{supp}(p)$, so we always have $\lba \hat{\mcal{J}}_{\alpha,t} \rba \leq \lba\msf{supp}(p)\rba \leq  s$. Hence to bound the failure probability $\Pr\lbp \mcal{E}^c_{t} \mv \mcal{E}_{t-1} \rbp$, we only need to control $\Pr \lbp \mcal{J}_{\alpha, t} \not\subset \hat{\mcal{J}}_{\alpha, t}  \mv \mcal{J}_{\alpha, t-1} \subseteq \hat{\mcal{J}}_{\alpha, t-1} \rbp$.
	
	If $\mcal{J}_{\alpha, t-1} \subseteq \hat{\mcal{J}}_{\alpha, t-1}$ holds, then $\mcal{J}_{\alpha, t}$ must be contained in the candidate set $\msf{C}\lp\hat{\mcal{J}}_{\alpha, t}\rp$, so the only way that $\lbp \mcal{J}_{\alpha, t} \not\subset \hat{\mcal{J}}_{\alpha, t}\rbp$ can happen is that there is some symbol in $\mcal{B}_m \cap \mcal{J}_{\alpha, t}$ that is not observed by $\mcal{K}_{t,m}$. Therefore, we have 
	\begin{align}\label{eq:interactive_err_bdd}
		\Pr\lbp \mcal{E}^c_{t} \mv \mcal{E}_{t-1} \rbp
		&= \Pr \lbp \mcal{J}_{\alpha, t} \not\subset \hat{\mcal{J}}_{\alpha, t}  \mv \mcal{J}_{\alpha, t-1} \subseteq \hat{\mcal{J}}_{\alpha, t-1} \rbp 
		\leq \Pr\lbp \exists j \in \mcal{B}_m \text{ s.t. } \forall i \in \mcal{K}_{m, t}\, X_i \neq j \rbp \nonumber\\
		& \leq \lba \mcal{B}_m \rba\cdot \lp 1-\alpha\rp^{\lba\mcal{K}_{m,t}\rba}
		\leq \frac{2s}{2^b-1}\cdot\exp\lp -\frac{n_1\alpha}{2s\log d} \rp,
	\end{align}
	where the last inequality follows from $\lba\mcal{K}_{m,t}\rba = \frac{\lba \mcal{G}_t \rba}{M} \geq n_1\frac{2^b-1}{2s\log d}$. 
	Plugging into \eqref{eq:e_seq}, we obtain
	$$ \Pr\lbp  \mcal{E}^c_{T} \rbp \leq  T\cdot \frac{2s}{2^b-1}\cdot\exp\lp -\frac{n_1\alpha}{2s\log d} \rp = \exp\lp -\frac{n_1(2^b-1)\alpha}{2s\log d} +\lp \log\log d+\log\lp \frac{2s}{2^b-1} \rp\rp \rp.$$
\end{proof}

\subsection*{Proof of Theorem~\ref{thm:freq_est}}
\begin{proof}
We prove that if we first randomly permute all of $n$ clients with shared randomness, then all of previous schemes apply to distribution-free setting. We start with introducing a few notation. Let $f_1, f_2,...,f_d \in [n]$ be the empirical frequencies of each symbol, i.e. $f_j \eqDef n\pi_j$, and let $\sigma$ be a $n$-permutation drawn uniformly at random from the permutation group $\mcal{S}_n$ with shared randomness. We set $n_1 = n_2 = n/2$ in the two-stage generic scheme Algorithm~\ref{alg:two_stage}, and use $F_1,...,F_d$ to denote the empirical frequency of the second half of samples, i.e. 
$F_j \eqDef \sum_{i=n/2+1}^n \bbm{1}_{\lbp X_{\sigma(i)} = j\rbp}$. Notice that $F_j$ is a random variable (since $\sigma$ is random) with hyper-geometric distribution $\msf{HG}\lp n, \frac{n}{2}, f_j \rp$. Finally, let $\Pi_j \eqDef \frac{2F_j}{n}$ be the empirical distribution of the second half of clients.
	
Let $\hat{\mcal{J}}_\alpha \lp Y^{n_1} \rp$ be an estimate of $\mcal{J}_\alpha \eqDef \lbp j \in [d] \,\mv\, \pi_j \geq \alpha \rbp$ and $\hat{\Pi}_j\lp Y^n_2 \rp$ be an estimator of $\Pi_j$ (both will be explicitly defined later), then the final estimator $\hat{p}_j$ is defined in the same way as \eqref{eq:p_estimator}, i.e.
$$ \hat{\pi}_j \eqDef \hat{\Pi}_j \cdot \bbm{1}_{\lbp j\in\hat{\mcal{J}}_\alpha \rbp}. $$
Now, consider using the same estimation scheme defined in Section~\ref{sec:estimation_stage} (i.e. Algorithm~\ref{alg:enc_estimation} and Algorithm~\ref{alg:dec_estimation}) with clients' index being replaced by $\sigma(i)$. Then $\hat{\Pi}_j\lp Y^{n_2} \rp$ is
\begin{equation}
\hat{\Pi}_j\lp Y^{n_2} \rp = \frac{2\cdot 2^b}{n\lp2^b-1\rp}\sum_{i=\frac{n}{2}+1}^n\bbm{1}_{\lbp Y_i = h_i\lp j \rp \rbp} - \frac{1}{2^b}.
\end{equation}
Notice that condition on $\sigma$, $\bbm{1}_{\lbp Y_i = h_i\lp j \rp \rbp}$ follows distribution $\bbm{1}_{\lbp X_i = j \rbp}+ \bbm{1}_{\lbp X_i \neq j \rbp} \cdot \msf{Ber}\lp \frac{1}{2^b} \rp$, so 
$$\sum_{i=\frac{n}{2}+1}^n\bbm{1}_{\lbp Y_i = h_i\lp j \rp \rbp}\sim F_j+\msf{Binom}\lp\frac{n}{2}-F_j, \frac{1}{2^b}\rp.$$ Thus $\hat{\Pi}_j\lp Y^n_2 \rp$ yields an unbiased estimator on $\Pi_j$ with variance bounded by
\begin{align}\label{eq:Pi_var_bdd}
	\Var\lp \hat{\Pi}_j \mv\sigma\rp 
	& = \Var\lp \frac{2\cdot 2^b}{n\lp2^b-1\rp}\msf{Binom}\lp\frac{n}{2}-F_j, \frac{1}{2^b}\rp \mv\sigma\rp \nonumber\\
	& \leq \frac{4\cdot 4^b}{n^2 \lp2^b-1\rp^2}\cdot \lp \frac{n}{2}-F_j \rp\cdot \frac{1}{2^b}
	\leq \frac{4}{n2^b}.
\end{align}
		
Next, we control the estimation errors by 
\begin{align}\label{eq:var_bdd}
		\E\lb \sum_{j\in[d]} \lp \hat{\pi}_j - \pi_j \rp^2 \rb 
		& \leq 2\E\lb \sum_{j\in[d]} \lp \hat{\pi}_j - \Pi_j \rp^2 \rb +2\E\lb \sum_{j\in[d]} \lp \Pi_j - \pi_j \rp^2 \rb.
\end{align}
To bound the first term, consider two cases $j \in \mcal{J}_\alpha$ and $j \not\in\mcal{J}_\alpha$. For $j \in \mcal{J}_\alpha$, we have
\begin{align*}
				&\E\lb \lp \hat{\pi}_j - \Pi_j \rp^2 \mv \sigma\rb \\
				= &\Pr\lbp \mcal{J}_\alpha \subseteq \hat{\mcal{J}}_\alpha \mv \sigma \rbp 
				\cdot \E\lb \lp \hat{\Pi}_j - \Pi_j \rp^2 \mv \mcal{J}_\alpha \subseteq \hat{\mcal{J}}_\alpha , \sigma \rb +
				\Pr\lbp \mcal{J}_\alpha \not\subset \hat{\mcal{J}}_\alpha \mv \sigma \rbp 
				\cdot \E\lb \lp \hat{\pi}_j - \Pi_j \rp^2 \mv \mcal{J}_\alpha \not\subset \hat{\mcal{J}}_\alpha , \sigma \rb \\
				\overset{\text{(a)}}{=}&\Pr\lbp \mcal{J}_\alpha \subseteq \hat{\mcal{J}}_\alpha \mv \sigma \rbp 
				\cdot \E\lb \lp \hat{\Pi}_j - \Pi_j \rp^2 \mv \sigma \rb +
				\Pr\lbp \mcal{J}_\alpha \not\subset \hat{\mcal{J}}_\alpha \mv \sigma \rbp 
				\cdot \E\lb \lp \hat{\pi}_j - \Pi_j \rp^2 \mv \mcal{J}_\alpha \not\subset \hat{\mcal{J}}_\alpha , \sigma \rb \\
				\leq &\E\lb \lp \hat{\Pi}_j - \Pi_j \rp^2 \mv \sigma \rb+2\Pr\lbp \mcal{J}_\alpha \not\subset \hat{\mcal{J}}_\alpha \mv \sigma \rbp 
				\cdot \E\lb \hat{\pi}^2_j + \Pi_j^2 \mv \mcal{J}_\alpha \not\subset \hat{\mcal{J}}_\alpha , \sigma \rb,
\end{align*}
where (a) holds since conditioned on $\sigma$, $\hat{\Pi}$ is independent with $\hat{\mcal{J}}_\alpha$.
Summing  over $j$, we obtain
\begin{align}\label{eq:Pi_bdd_1}
	\E\lb \sum_{j\in\mcal{J}_\alpha} \lp \hat{\pi}_j - \Pi_j \rp^2 \mv \sigma \rb 
	& \leq \sum_{j \in \mcal{J}_\alpha} \Var\lp \hat{\Pi}_j \mv\sigma\rp + 2\Pr\lbp \mcal{J}_\alpha \not\subset \hat{\mcal{J}}_\alpha \mv \sigma \rbp \sum_{j \in \mcal{J}_\alpha} \E\lb \hat{\pi}^2_j + \Pi_j^2 \mv \mcal{J}_\alpha \not\subset \hat{\mcal{J}}_\alpha , \sigma \rb\nonumber\\
	&\leq \frac{4s}{n2^b} + 4\Pr\lbp \mcal{J}_\alpha \not\subset \hat{\mcal{J}}_\alpha \mv \sigma \rbp,
\end{align}
where the last inequality follows from \eqref{eq:Pi_var_bdd} and the fact that $\sum_{j} \Pi_j^2 \leq 1$ and $\sum_{j} \hat{\pi}_j^2 \leq 1$. Similarly for $j\not\in\mcal{J}_\alpha$, we have
	\begin{align*}
	&\E\lb \lp \hat{\pi}_j - \Pi_j \rp^2 \mv \sigma\rb 
	\leq \Pr\lbp j \in \hat{\mcal{J}}_\alpha \mv \sigma\rbp \cdot \E\lb \lp \hat{\Pi}_j - \Pi_j \rp^2 \mv \sigma \rb+\Pr\lbp j \not\in \hat{\mcal{J}}_\alpha \mv \sigma \rbp \cdot \Pi_j^2.
	\end{align*}
Summing  over $j$, we obtain
\begin{align}\label{eq:Pi_bdd_2}
	\E\lb \sum_{j\not\in\mcal{J}_\alpha} \lp \hat{\pi}_j - \Pi_j \rp^2 \mv \sigma \rb 
	& \leq \frac{4}{n2^b}\sum_{j\not\in \mcal{J}_\alpha}\Pr\lbp j \in \hat{\mcal{J}}_\alpha \mv \sigma\rbp + \sum_{j \not\in \mcal{J}_\alpha} \Pi_j^2\nonumber \\
	& \leq \frac{4}{n2^b}\E\lb \sum_{j \in [d]} \bbm{1}_{\lbp j \in \hat{\mcal{J}}_\alpha \rbp} \rb + \sum_{j \not\in \mcal{J}_\alpha} \Pi_j^2 \nonumber\\
	& \leq \frac{4s}{n2^b}+  \sum_{j \not\in \mcal{J}_\alpha} \Pi_j^2,
\end{align}
where in the last inequality we use the fact that $\lba \hat{\mcal{J}}_\alpha \rba \leq s$ with probability $1$. Plugging \eqref{eq:Pi_bdd_1} and \eqref{eq:Pi_bdd_2} into \eqref{eq:var_bdd} yields
\begin{align}\label{eq:est_stage_bdd}
	\E\lb \sum_{j\in[d]} \lp \hat{\pi}_j - \pi_j \rp^2 \rb 
	& \leq \frac{8s}{n2^b}+4\Pr\lbp \mcal{J}_\alpha \not\subset \hat{\mcal{J}}_\alpha \rbp+\E\lb \sum_{j\not\in \mcal{J}_\alpha} \Pi^2_j \rb + 2\E\lb \sum_{j\in[d]}\lp \Pi_j - \pi_j \rp^2 \rb \nonumber\\
	& \overset{\text{(a)}}{\leq} \frac{8s}{n2^b}+4\Pr\lbp \mcal{J}_\alpha \not\subset \hat{\mcal{J}}_\alpha \rbp + \sum_{j\not\in \mcal{J}_\alpha} \lp \pi_j^2 + \Var\lp \Pi_j \rp\rp +\sum_{j \in [d]} \Var\lp \Pi_j \rp \nonumber\\
	& \overset{\text{(b)}}{\leq} \frac{8s}{n2^b}+4\Pr\lbp \mcal{J}_\alpha \not\subset \hat{\mcal{J}}_\alpha \rbp +s\alpha^2 + 2\sum_{j \in [d]} \Var\lp \Pi_j \rp\nonumber\\
	& \overset{\text{(c)}}{\leq} \frac{8s}{n2^b}+4\Pr\lbp \mcal{J}_\alpha \not\subset \hat{\mcal{J}}_\alpha \rbp +s\alpha^2 + \frac{1}{n},
\end{align}
where (a) holds since 1) $\Pi_j \sim \frac{2}{n}\msf{HG}\lp n, \frac{n}{2}, f_j\rp$ so $\E\lb \Pi_j \rb = \pi_j$, and 2) $\E\lb X^2 \rb = \lp \E X\rp^2 + \Var\lp X \rp$, (b) holds since by definition, for all $j \not\in \mcal{J}_\alpha$, $\pi_j \leq \alpha$, and (c) follows from the fact $\Var\lp \Pi_j \rp \leq \frac{\pi_j}{2n}$. Hence it remains to bound $\Pr\lbp \mcal{J}_\alpha \not\subset \hat{\mcal{J}}_\alpha \rbp$. 
								
Next, we prove that the localization schemes (Scheme~A, B, D) yields the same (oder-wise) bound of failure probability.
\paragraph{Scheme~A (uniform grouping)} We slightly modify the encoding scheme Algorithm~\ref{alg:enc_localization_1} such that each client $i$ in the localization stage (i.e. $i$ satisfies $\sigma(i)<n/2$) is assigned to a randomly selected group $\mcal{G}_m$ with $m \in \msf{uniform}(M)$ chosen by shared randomness, where $M = \frac{d}{2^b-1}$. Follow the same analysis as in Lemma~\ref{clm:1}, for any $j \in \mcal{J}_\alpha$, it can be successful localized if one of symbol $j$ in the first half sequence is assigned to $\mcal{G}_m$ with $j \in [(m-1)(2^b-1)+1: m(2^b-1)]$. Denote such event by $\mcal{E}_j$. Since there are $F_j' \eqDef f_j - F_j$ clients in the first stage observing $j$, the probability that symbol $j$ is not detected is 
\begin{equation}\label{eq:exp_F}
\Pr\lbp \mcal{E}^c_j \mv \sigma \rbp \leq \lp 1-\frac{1}{M}\rp^{F_j'} \leq e^{-\frac{F_j'}{M}} = e^{-\frac{F_j'\cdot (2^b-1)}{d}}.
\end{equation}
To further bound it, we apply Hoeffding's inequality on hyper-geometric distribution (notice that $F_j' \sim \msf{HG}\lp n, \frac{n}{2}, f_j \rp$).
\begin{lemma}[Hyper-geometric tail bound]\label{lemma:hg_bdd}
	Let $F \sim \msf{HG}(n, k, f)$ and define $p = k/n$. We have
	$$ \Pr\lbp F \leq (p-t)f \rbp \leq e^{-2t^2f},$$
	for all $t \in (0, p)$. 
\end{lemma}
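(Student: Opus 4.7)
\textbf{Proof proposal for Lemma~\ref{lemma:hg_bdd}.} The plan is to derive the bound by combining the well-known symmetry of the hyper-geometric distribution with Hoeffding's inequality for sampling without replacement. Naively, one may view $F\sim\msf{HG}(n, k, f)$ as a sum of $k$ indicators (one per sample drawn), but applying Hoeffding's bound directly in this representation yields an exponent of order $t^2 f^2 / k$, which is weaker than the target $2t^2 f$ whenever $k > f$. The whole point of the argument is therefore to change the representation before invoking Hoeffding.

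First, I would use the symmetry $\msf{HG}(n, k, f) \stackrel{d}{=} \msf{HG}(n, f, k)$: both distributions can be viewed as the size of the intersection between a fixed subset $\mcal{A}$ of $[n]$ of size $k$ and a uniformly random subset $\mcal{B}$ of $[n]$ of size $f$, since the roles of $\mcal{A}$ and $\mcal{B}$ are interchangeable. Consequently, we may write $F \stackrel{d}{=} \sum_{i=1}^{f} Z_i$, where $Z_1,\dots,Z_f \in \{0,1\}$ are the indicators of a simple random sample of size $f$ drawn without replacement from the population $\{c_1,\dots,c_n\}$ with $c_j = \Indc{j \in \mcal{A}}$. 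In particular, $\sum_{j=1}^n c_j = k$, so $\bar c \eqDef \frac{1}{n}\sum_j c_j = k/n = p$, and $\E[F] = fp$.

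Second, I would invoke Hoeffding's Theorem~4 (Hoeffding, 1963) for bounded variables sampled without replacement: if $c_j \in [a,b]$ and $Z_1,\dots,Z_f$ are drawn without replacement from $\{c_1,\dots,c_n\}$, then for all $s > 0$,
$$ \Pr\lbp \sum_{i=1}^f Z_i - f\bar c \leq -s \rbp \leq \exp\lp -\frac{2s^2}{f(b-a)^2} \rp. $$
Applying this with $a=0$, $b=1$, $\bar c = p$, and $s = tf$ yields
$$ \Pr\lbp F \leq pf - tf \rbp = \Pr\lbp \sum_{i=1}^f Z_i - fp \leq -tf \rbp \leq \exp\lp -2t^2 f \rp, $$
which is exactly the claimed inequality; the restriction $t \in (0, p)$ is automatic from the nonnegativity of $F$.

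The only non-routine step is recognizing that Hoeffding's reduction should be applied after swapping the roles of $k$ and $f$; the rest is bookkeeping. An alternative would be the standard Hoeffding/Chernoff reduction showing that the moment generating function of $\sum Z_i$ under sampling without replacement is dominated by that under sampling with replacement from the same empirical distribution, which gives the same constant $2t^2 f$ in the exponent after the symmetric reformulation. Either route reaches the stated bound.
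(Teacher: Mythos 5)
The paper does not actually prove this lemma: it simply labels it ``Hyper-geometric tail bound,'' attributes it to Hoeffding's inequality for the hyper-geometric distribution, and applies it directly. Your proposal supplies the missing derivation, and it is correct. The key observation — that one must first invoke the symmetry $\msf{HG}(n,k,f) \stackrel{d}{=} \msf{HG}(n,f,k)$ so that the sample size in Hoeffding's without-replacement bound becomes $f$ rather than $k$ — is exactly what makes the exponent come out as $2t^2 f$ instead of the weaker $2t^2 f^2/k$; without that swap the stated bound does not follow from a naive application. The rest (Hoeffding's Theorem~4 for sampling without replacement with $a=0$, $b=1$, $s=tf$) is routine bookkeeping, as you say. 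One small nit: the restriction $t\in(0,p)$ in the lemma statement is not forced by your argument (Hoeffding's bound holds for all $t>0$); it is merely the regime in which the bound is nontrivial, so ``automatic from nonnegativity'' is a slight overstatement, but this does not affect correctness.
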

Applying Lemma~\ref{lemma:hg_bdd} on \eqref{eq:exp_F} with $t = \frac{1}{4}$, we have
	$$ \E\lb \Pr\lbp \mcal{E}^c_j \mv \sigma \rbp\rb \leq \Pr\lbp F_j' > \frac{1}{4}f_j \rbp\cdot e^{-\frac{f_j\cdot (2^b-1)}{4d}} + \Pr\lbp F_j' \leq \frac{1}{4}f_j \rbp \leq e^{-\frac{f_j\cdot (2^b-1)}{4d}}+e^{-\frac{f_j}{2}}. $$
Therefore 
	$$ \Pr\lbp\bigcup_{j\in\mcal{J}_\alpha} \mcal{E}_j^c \rbp \leq \sum_{j\in\mcal{J}_\alpha} e^{-\frac{f_j\cdot (2^b-1)}{4d}}+e^{-\frac{f_j}{2}} \leq se^{-\frac{n\alpha (2^b-1)}{4d}}+se^{-\frac{n\alpha}{2}}. $$
pick $\alpha = \frac{1}{\sqrt{n2^b}}$ and by the same argument as in \eqref{eq:l2_final_bdd}, the $\ell_2$ estimation error can be bounded by $O\lp \max\lp\frac{s}{n2^b}, \frac{1}{n} \rp\rp$, as long as $n = \Omega\lp \frac{d^2\log d^2}{2^b} \rp$.
									
\paragraph{Scheme~B: random hashing} Let $\mcal{E}_\ell$ be defined as in the proof of Lemma~\ref{lemma:2}, and $j \in \mcal{J}_\alpha \setminus\mcal{C}_\ell$. Note that symbol $j \in \mcal{J}_\alpha$ can be detected if a client $i$ in localization stage observes $j$ and $W_i$ distinguishes $j$ under hypothesis $H_\ell$.  Since every channels are generated identically and independently, such probability can be controlled by \eqref{eq:distinguish_j}. Also notice that there are $F_j'$ clients in the first stage who observe $j$, the failure probability can be controlled by
	$$ \Pr\lbp \mcal{E}_{\ell} \mv \sigma \rbp \leq \lp 1-  \frac{2^b-1}{4s}\rp^{F_j'} \leq \exp\lp -F_j'\frac{2^b-1}{4s}\rp, $$
applying Lemma~\ref{lemma:hg_bdd} gives us 
	$$ \E\lb\Pr\lbp \mcal{E}_{\ell} \mv \sigma \rbp\rb \leq \exp\lp -nf_j\frac{2^b-1}{16s}\rp + e^{-\frac{f_j}{2}} \leq \exp\lp -n\alpha\frac{2^b-1}{32s}\rp + e^{-\frac{n\alpha}{2}}. $$
Taking union bound over $\ell \in [N]$, we obtain
	$$ \Pr\lbp \bigcup_{\ell\in[N]}\mcal{E}_\ell \rbp \leq \exp\lp -n \alpha\frac{2^b-1}{32s} + C_0s\log\lp \frac{d}{s}\rp\rp + e^{-\frac{n\alpha}{2}+ C_0s\log\lp \frac{d}{s}\rp}. $$
Selecting $\alpha = \frac{1}{\sqrt{n2^b}}$ and as in Section~\ref{sec:localization_hasing}, we conclude that as long as $n =\Omega\lp\frac{s^4\log^2\lp\frac{d}{s}\rp}{2^b}\rp$, the $\ell_2$ error \eqref{eq:hash_error_1} is $O\lp\max\lp\frac{s}{n 2^b}, \frac{1}{n}\rp\rp$.
Scheme~A and Scheme~B establishes the non-interactive part of Theorem~\ref{thm:freq_est}.
									
\paragraph{Scheme~D: tree-based recovery} As in Scheme~A, we replace every deterministic grouping with a randomized one. That is, in the localization step in Scheme~D, instead of set client $i$ with $\sigma(i) \equiv t (\,\msf{mod}\, \log d)$ into $\mcal{G}_t$, we assign it to group $t_i  \sim \msf{uniform}(\log d)$. Similarly, they will later be assigned to $\mcal{K}_{t, m}$ with $m\sim \msf{uniform}(M)$. Then conditioned on $\sigma$, all of the analysis in the proof of Lemma~\ref{lemma:localization_seq} holds, except that we replace \eqref{eq:interactive_err_bdd} with
\begin{align*}
	\Pr\lbp \mcal{E}^c_{t} \mv \mcal{E}_{t-1}, \sigma \rbp
	&= \Pr \lbp \mcal{J}_{\alpha, t} \not\subset \hat{\mcal{J}}_{\alpha, t}  \mv \mcal{J}_{\alpha, t-1} \subseteq \hat{\mcal{J}}_{\alpha, t-1}, \sigma\rbp \nonumber\\
	& \leq \Pr\lbp \exists j \in \mcal{B}_m \text{ s.t. } \forall i \in \mcal{K}_{m, t}\, X_i \neq j  \mv \sigma \rbp \\
	& \leq \lba \mcal{B}_m \rba \Pr\lbp \forall i \in \mcal{K}_{m, t}\, X_i \neq j  \mv \sigma \rbp \\
	& \overset{\text{(a)}}{\leq}\frac{2s}{2^b-1}\cdot \lp 1-\frac{1}{MT} \rp^{F'_j}\\
	&\leq \frac{2s}{2^b-1} \exp\lp \frac{F'_j (2^b-1)}{2s\log d} \rp.
\end{align*}
where (a) holds because we assign each client uniformly at random in $[T]\times[M]$. Applying Lemma~\ref{lemma:hg_bdd}, we have
										$$ \Pr\lbp \mcal{E}^c_{t} \mv \mcal{E}_{t-1}\rbp \leq \frac{2s}{2^b-1} \exp\lp \frac{ n\alpha(2^b-1)}{8s\log d} \rp + \exp\lp-\frac{n\alpha}{2}\rp.$$
Plugging into \eqref{eq:e_seq} and assume $\frac{8s}{2^b-1} > 2$, we obtain
										$$ \Pr\lbp  \mcal{E}^c_{T} \rbp \leq  T\cdot \frac{4s}{2^b-1}\cdot\exp\lp -\frac{n\alpha}{8s\log d} \rp = \exp\lp -\frac{n(2^b-1)\alpha}{8s\log d} +\lp \log\log d+\log\lp \frac{4s}{2^b-1} \rp\rp \rp.$$
										
Finally, picking $n_1 = \frac{n}{2}$ and $\alpha = \frac{1}{\sqrt{n2^b}}$ and combining Lemma~\ref{lemma:localization_seq} and Lemma~\ref{cor:two_stage_error}, we arrive at
\begin{align*}
	\E\lb \sum_{j\in[d]} \lp \hat{p}_j -p\rp^2 \rb 
	& \leq \exp\lp -\frac{n(2^b-1)\alpha}{8s\log d} +\lp \log\log d+\log\lp \frac{4s}{2^b-1} \rp\rp \rp+\frac{3s}{n2^b}+\frac{2}{n}\\
	& \leq C_0\cdot\lp\frac{s}{n 2^b}+ \frac{1}{n}\rp, 
\end{align*}
where the last inequality follows from Lemma~\ref{lemma:n_requirement} and the assumption 
$$n = \Omega\lp\frac{s^2\log^2 d}{2^b}\lp\log s + \log\log d \rp^2\rp.$$ 
This establishes the interactive part of Theorem~\ref{thm:freq_est}.
											
										
\end{proof}

\subsection*{Proof of Theorem~\ref{thm:almost_sparse}}
\begin{proof}
	Let $\mcal{S}$ be the set of symbols with $s$-highest probabilities, that is, $\mcal{S} = \lbp j\in[d]: p_j \geq p_{(s)} \rbp$\footnote{In case that $p_{(s)} = p_{(s+1)}$, we define $\mcal{S}$ to be an arbitrary set such that $\lba \mcal{S} \rba = s$ and $p_j \geq p_{(s)} \, \forall i \in \mcal{S}$.} and let $\beta \eqDef p_{(s+1)}$. As in previous section, write $\mcal{J}_\alpha \eqDef \lbp i: p_j \geq \max\lp\alpha, p_{(s)}\rp \rbp$\footnote{Again, with a slight abuse of notation, if $\alpha = p_{(s)} = p_{(s+1)}$, we require $\mcal{J}_\alpha = \mcal{S}$.} where $\alpha \geq 0$ will be specify later. Let $\hat{\mcal{J}}_\alpha$ be the output of the localization step in Scheme~B. 
	\begin{claim}\label{claim:1}
		$$\Pr\lbp\mcal{J}_\alpha  \not\subset \hat{\mcal{J}}_\alpha \rbp \leq n_1\frac{2^b-1}{4s}\lp 1-P_\mcal{S} \rp +  \exp\lp -n_1  \alpha\frac{2^b-1}{4s} + C_0 s\log d\rp.$$
		Moreover, $\lba \hat{\mcal{J}}_\alpha \rba < s$ almost surely.
	\end{claim}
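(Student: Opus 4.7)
The plan is to adapt the proof of Lemma~\ref{lemma:2} to the approximately-sparse setting. The new wrinkle is that an observation of a tail symbol in $[d] \setminus \mcal{S}$ can prevent even the ideal candidate $\mcal{S}$ from being consistent with $(W^{n_1}, Y^{n_1})$; when that happens, the decoder's output is essentially arbitrary and the argument of Lemma~\ref{lemma:2} no longer applies directly. To isolate this new failure mode I will decompose the error event into two pieces and bound each separately.

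Specifically, let $\mcal{A}$ denote the event that some client $i \in [n_1]$ observes $X_i \notin \mcal{S}$ whose channel $W_i$ distinguishes $X_i$ under $\mcal{S}$ in the sense of Definition~\ref{def:distinguish}, and let $\mcal{B}_\ell$ denote the event that $\mcal{C}_\ell$ is consistent with $(W^{n_1}, Y^{n_1})$ while $\mcal{J}_\alpha \not\subset \mcal{C}_\ell$. On $\mcal{A}^c$ every client either observes a symbol in $\mcal{S}$ (trivially matched to itself) or a tail symbol whose hash collides with that of some element of $\mcal{S}$, so $\mcal{S}$ itself is a consistent candidate; hence the decoder returns some consistent $\mcal{C}_\ell$, and failure requires $\mcal{J}_\alpha \not\subset \mcal{C}_\ell$, i.e.\ some $\mcal{B}_\ell$ occurs. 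Consequently $\lbp \mcal{J}_\alpha \not\subset \hat{\mcal{J}}_\alpha \rbp \subseteq \mcal{A} \cup \bigcup_\ell \mcal{B}_\ell$, and a union bound reduces the proof to separately controlling $\Pr\lbp \mcal{A} \rbp$ and $\Pr\lbp \bigcup_\ell \mcal{B}_\ell \rbp$.

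For $\Pr\lbp \mcal{A} \rbp$ I apply a union bound over the $n_1$ clients and compute the per-client probability as $\sum_{x \notin \mcal{S}} p_x \cdot \Pr\lbp W_i \text{ distinguishes } x \text{ under } \mcal{S} \rbp$. A direct calculation on the non-uniform hash distribution \eqref{eq:Lx}, identical in spirit to equation \eqref{eq:distinguish_j} but treated as an upper rather than lower bound, yields the per-symbol bound $(2^b-1)/(4s)$; since $\sum_{x \notin \mcal{S}} p_x = 1 - P_\mcal{S}$, this gives $\Pr\lbp\mcal{A}\rbp \leq n_1(2^b-1)(1-P_\mcal{S})/(4s)$, the first term of the claim. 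For $\Pr\lbp \bigcup_\ell \mcal{B}_\ell \rbp$ I repeat the argument of Lemma~\ref{lemma:2} verbatim: every wrong candidate contains some $j^* \in \mcal{J}_\alpha \setminus \mcal{C}_\ell$ with $p_{j^*} \geq \alpha$, the probability that no client observes $j^*$ and distinguishes it under $\mcal{C}_\ell$ is at most $\exp(-n_1 \alpha (2^b-1)/(4s))$, and a union bound over ${d \choose s} \leq \exp(C_0 s \log d)$ candidates delivers the second term. The size bound $|\hat{\mcal{J}}_\alpha| \leq s$ follows immediately from the fact that every candidate $\mcal{C}_\ell$ has size exactly $s$.

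The main obstacle is the decomposition step: without exact sparsity the decoder has two qualitatively different failure modes (no consistent candidate at all, versus a wrong consistent candidate), and one must verify that both are bounded by essentially the same distinguishing-probability calculation already developed for Lemma~\ref{lemma:2}. Once this decomposition is in place, both terms follow directly from that lemma's technology, now applied with an upper-bounded distinguishing probability for the tail contribution.
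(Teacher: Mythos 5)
Your proposal is correct and follows essentially the same route as the paper: the paper's decomposition into $\mcal{F}_\ell$ (a wrong candidate $\mcal{C}_\ell$ with $\mcal{J}_\alpha \not\subset \mcal{C}_\ell$ being consistent, bounded exactly as in Lemma~\ref{lemma:2} with a union bound over ${d \choose s}$ candidates) and $\mcal{G}_{\ell^*}$ (the top-$s$ set $\mcal{S}$ failing to be consistent, which the paper bounds by precisely your event $\mcal{A}$ that some client observes a tail symbol whose hash distinguishes it from all of $\mcal{S}$, yielding the $n_1\frac{2^b-1}{4s}(1-P_\mcal{S})$ term) matches your $\mcal{B}_\ell$/$\mcal{A}$ split. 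The only caveat, shared identically by the paper's own argument, is that the per-symbol distinguishing probability is only upper bounded by roughly $\frac{2^b-1}{s}$ rather than $\frac{2^b-1}{4s}$, a constant-factor looseness that does not affect the downstream use of the claim.
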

	\begin{proof}
		Recall the in the decoding step of Scheme~B, we define all the candidate supports as $\mcal{C}_\ell$ for $\ell \in [N]$ (where $N \eqDef {d \choose s}$) and reduce the problem into multiple hypothesis $H_\ell: \mcal{J}_\alpha \subseteq \mcal{C}_\ell$. Now, without $s$-sparse assumption, we need a more detailed analysis and carefully bounding the failure probability.
		
		To begin with, define $\mcal{B} \subset [N]$ to be the indices of ``incorrect'' hypotheses, that is, $\mcal{B} \eqDef \lbp \ell \in [N]: \mcal{J}_\alpha \not\subset \mcal{C}_\ell \rbp$.  Then observe that the error event 
		$\mcal{E}_f \eqDef \lbp\mcal{J}_\alpha  \not\subset \hat{\mcal{J}}_\alpha \rbp$ occurs when 1) there exist one incorrect but consistent\footnote{Recall that the consistency is defined in Def~\ref{def:consistency}} candidates $\ell \in \mcal{B}$, i.e. for $\ell \in \mcal{B}$, define 
		$$ \mcal{F}_\ell \eqDef \lbp \mcal{C}_\ell \text{ is consistent with } \lp Y^{n_1}, W^{n_1} \rp \rbp; $$
		and 2) for all $\ell \not\in \mcal{B}$, $\mcal{C}_\ell$ does not consistent with $\lp Y^{n_1}, W^{n_1} \rp$, i.e. for all $\ell\not\in \mcal{B}$,
		$$\mcal{G}_\ell \eqDef \lbp \mcal{C}_\ell \text{ is \emph{not} consistent with } \lp Y^{n_1}, W^{n_1} \rp \rbp.$$
		Therefore, we can bound the failure event by
		$$\mcal{E}_f \subseteq  \lp \bigcup_{\ell \in \mcal{B}} \mcal{F}_\ell\rp \cup \lp \bigcap_{\ell\in [N]\setminus\mcal{B}} \mcal{G}_\ell \rp.$$
		
		\paragraph{Bounding $\mcal{F}_\ell$}
		We bound $\mcal{F}_\ell$ as in previous section: observe that $\mcal{F}_\ell$ cannot happen if there exist a client $i$ 1) who observes $j\in\mcal{J}_\alpha \setminus \mcal{C}_\ell$ and 2) whose channel $W_i$ distinguishes $j$. 
		Thus we have
		\begin{align}\label{eq:bound_F}
			\Pr\lbp \mcal{F}_\ell \rbp 
			& \leq \lp 1- \Pr\lbp \lbp X_i = j \rbp \cap \lbp W_i \text{ distinguishes } j \rbp \rbp\rp^{n_1} \nonumber\\
			& \leq \lp 1- \alpha \cdot \frac{2^b-1}{4s}\rp^{n_1} \leq \exp\lp -n_1  \alpha\frac{2^b-1}{4s}\rp.
		\end{align}
		
		\paragraph{Bounding $\mcal{G}_\ell$} Let $\ell^*$ be the index such that $\mcal{C}_{\ell^*} \eqDef \mcal{S}$. Then 
		\begin{equation}\label{eq:bound_G}
		\lp \bigcap_{\ell\in [N]\setminus\mcal{B}} \mcal{G}_\ell \rp \subseteq \mcal{G}_{\ell^*}.
		\end{equation}
		Observe that $\mcal{G}_{\ell^*}$ happens if there exist a client $i$ who 1) observes $j \neq \mcal{S}$ and 2) $W_i$ distinguishes $j$. Therefore 
		\begin{align}\label{eq:bound_G_2}
			\Pr\lbp \mcal{G}_{\ell^*} \rbp 
			& \leq \Pr\lbp \bigcup_{i\in[n]}\bigcup_{j\in [d]\setminus\mcal{S}} \lbp X_i = j \rbp \cap \lbp W_i \text{ distinguishes} j \rbp\rbp \nonumber\\
			& \overset{\text{(a)}}{\leq} n\sum_{j\in [d]\setminus\mcal{S}}\Pr\lbp \lbp X_i = j \rbp \cap \lbp W_i \text{ distinguishes} j \rbp\rbp\nonumber\\
			&  \overset{\text{(b)}}{\leq} n\frac{2^b-1}{4s}\lp\sum_{j\in [d]\setminus\mcal{S}}\Pr\lbp  X_i = j \rbp\rp \nonumber\\
			& = n\frac{2^b-1}{4s}\lp 1-P_\mcal{S} \rp,
		\end{align}
		where (a) is due to union bound, and (b) is because we generate $W_i$ independently with $X_i$.
		
		Finally, combining \eqref{eq:bound_F}, \eqref{eq:bound_G} and \eqref{eq:bound_G_2}, we obtain
		\begin{align*}
			\Pr\lbp \mcal{E}_f\rbp 
			&\leq \Pr\lbp \lp\bigcup_{\ell \in \mcal{B}} \mcal{F}_\ell\rp \cup \lp \bigcap_{\ell\in [N]\setminus\mcal{B}} \mcal{G}_\ell \rp\rbp \\
			& \leq n_1\frac{2^b-1}{4s}\lp 1-P_\mcal{S} \rp +  \exp\lp -n_1  \alpha\frac{2^b-1}{4s} + C_0 s\log d\rp,
		\end{align*}
		where the last inequality is due to union bound over $[N]  = {d \choose s}$.
	\end{proof}
	
	As in previous section, picking $\alpha = \frac{1}{\sqrt{n\cdot 2^b}}$ and by Claim~\ref{claim:1}, we can show that
	\begin{align*}
		\E\lb \sum_{j\in[d]} \lp \hat{p}_j -p\rp^2 \rb
		\leq \lp n_1\frac{2^b-1}{4s}+1\rp\lp 1-P_\mcal{S} \rp& + \exp\lp -\sqrt{\frac{n^2_1\lp 2^b -1 \rp^2}{n\cdot 2^b}}\frac{1}{4s}+C_0s\log\lp\frac{d}{s}\rp \rp\\
		& +\frac{2s}{(n-n_1)2^b}+\frac{1}{n-n_1}.
	\end{align*}
	Finally, picking $n_1 = C_1\cdot \sqrt{n\log n} \cdot s \cdot \log\lp \frac{d}{s} \rp$, we have
	$$ \E\lb \sum_{j\in[d]} \lp \hat{p}_j -p\rp^2 \rb  \leq C_2\cdot \lp \frac{s}{n\cdot2^b} + \frac{1}{n}\rp+C_3\cdot\sqrt{n\log n}\cdot 2^b\cdot\lp 1-P_\mcal{S} \rp. $$
\end{proof}




\end{document}